\documentclass{article} 
\usepackage{iclr2027_arxiv,times}

\usepackage{amsmath,amsfonts,bm}

\def\eqref#1{equation~\ref{#1}}

\def\1{\bm{1}}

\DeclareMathAlphabet{\mathsfit}{\encodingdefault}{\sfdefault}{m}{sl}
\SetMathAlphabet{\mathsfit}{bold}{\encodingdefault}{\sfdefault}{bx}{n}
\newcommand{\tens}[1]{\bm{\mathsfit{#1}}}

\def\tV{{\tens{V}}}

\usepackage{amsthm}
\newtheorem{theorem}{Theorem}
\newtheorem{definition}{Definition}
\newtheorem{assumption}{Assumption}

\usepackage{hyperref}
\usepackage{url}
\usepackage{graphicx}

\usepackage{xcolor}         
\definecolor{mitred}{RGB}{117, 0, 20}
\usepackage{tcolorbox}

\title{Emergent One-Third Scaling Law as Attention Tries to Concentrate}

\author{Yizhou Liu$^\clubsuit$, Sara Kangaslahti$^{\diamondsuit}$, Jeff Gore$^{\clubsuit}$
\\
Massachusetts Institute of Technology$^\clubsuit$
\\
Harvard University$^{\diamondsuit}$
\\
\texttt{\{liuyz, gore\}@mit.edu}
}

\iclrfinalcopy 
\begin{document}

\maketitle

\begin{abstract}
The neural scaling law relating longer training to better performance through a power law is central to today's large language models (LLMs), yet its origin remains debated. One recent proposal is that power laws can emerge from the strong non-linearity of a single softmax head learning peaked distributions. What happens with multiple softmax functions, as in LLMs, is unclear. Here, we show through toy models that any softmax learning peaked distributions, regardless of its position in the model, can develop logit magnitudes that grow in a power law with exponent $1/3$, becoming a training bottleneck whose loss contribution decays as a power law with the same exponent $1/3$. The overall loss therefore obeys $1/3$ scaling whenever at least one softmax learns peaked distributions. We confirm that many softmax functions in LLMs learn peaked distributions and that LLM loss scaling matches this $1/3$ prediction. Moreover, logit growth dynamics reveal that attention heads, rather than the language modeling head, are the bottleneck likely driving the $1/3$ loss scaling in LLMs. Attention trying to concentrate on specific information, which is the heart of Transformers, may therefore also be the heart of the neural scaling law of training. 
\end{abstract}

\section{Introduction}\label{sec:intro}

Neural scaling laws \citep{hestness2017deep,kaplan2020scaling,hoffmann2022chinchilla} are a key reason large language models (LLMs) are large: loss decreases as a power law with training dataset size and model size, so longer training and more parameters persistently lead to better performance. Despite their empirical power in guiding pre-training, the origins of these power laws remain debated, leaving uncertainty about predicting and improving future scaling.

In this work, we focus on the origin of dataset size scaling. Existing proposals fall into two branches. One attributes the loss to limited information in finite observed samples \citep{sharma2022neural,bahri2024explaining}. The other notes that LLMs use online one-epoch training, where the number of training steps is proportional to dataset size, and explains the scaling law from training dynamics \citep{bordelon2025feature,bordelon2025theory}. Despite the difference, most works from both branches similarly conclude that power laws in loss inherit from power-law data structures.

Recently, a new mechanism was proposed showing that neural scaling can emerge from non-linearity \citep{liu2026universal,kuhn2026boundary} even without power-law structures in the data. When a softmax learns peaked distributions, as happens in LLMs, the non-linearity yields power-law vanishing gradients, ultimately producing power-law training dynamics. Preliminary agreement \citep{liu2026universal,liu2026neural} between the predicted exponent $1/3$ and loss data from open-source models \citep{biderman2023pythia,olmo20242}, as well as the Chinchilla scaling laws \citep{hoffmann2022chinchilla}, suggests the mechanism may be relevant to LLMs. However, more detailed theoretical studies and LLM experiments are needed to verify this connection. In particular, the toy modeling in prior works \citep{liu2026universal,kuhn2026boundary} considered a single layer without feature learning \citep{bordelon2025feature}, while LLMs have multiple layers and softmax functions (e.g., softmax attention). We try to bridge this gap by asking
\begin{tcolorbox}[colframe=mitred, opacityback=0.9, size = title]
\textbf{Question}: How do multiple layers/softmaxes affect the loss landscape and training dynamics?
\end{tcolorbox}
Answering this enables a closer comparison between theory and LLMs and tests the relevance of the proposed mechanism.

By experimenting with toy models, we find that as long as one softmax layer produces peaked distributions, the overall loss decays as a power law with exponent $1/3$. Softmax functions driving this power-law loss also have their logit magnitudes growing as a power law with exponent $1/3$. The theoretical contribution is explaining why softmax in middle layers can lead to the same exponent $1/3$ for the final loss, which follows from a new theorem that this scaling is not specific to loss definitions. In studying LLMs, we find that (i) attention heads and the language modeling (LM) head learn peaked distributions, (ii) some attention heads, not the LM head, have logit magnitudes growing as power laws with exponents close to $1/3$, and (iii) losses decay as power laws with exponents close to $1/3$. These findings suggest that power laws emerging from non-linearity are likely relevant to LLMs, with attention exhibiting the key non-linear effects.
\begin{tcolorbox}[colframe=mitred, opacityback=0.9, size=small, title={To summarize, our contributions are}]
$\bullet$ Any softmax function learning peaked distributions becomes a bottleneck regardless of its location and leads to power-law loss with exponent $1/3$.

$\bullet$ Loss universality: The $1/3$ power-law scaling is not specific to loss definitions, which explains why softmax in middle layers can lead to $1/3$ loss scaling.

$\bullet$ In LLMs, we identified that attention layers trying to concentrate (learning peaked distributions over the context) may be the non-linear effect leading to $1/3$ loss scaling.
\end{tcolorbox}

Overall, we first show that multiple softmax functions still lead to $1/3$ loss scaling and find that attention trying to concentrate may be an origin of neural scaling laws. Toy model setups are in Section~\ref{sec:toy}, theoretical analysis in Section~\ref{sec:theory}, and LLM experiments in Section~\ref{sec:llm}. We compare with related works in Section~\ref{sec:related} and conclude with discussions in Section~\ref{sec:discuss}.

\section{Toy model}\label{sec:toy}
\begin{figure}
\begin{center}
\includegraphics{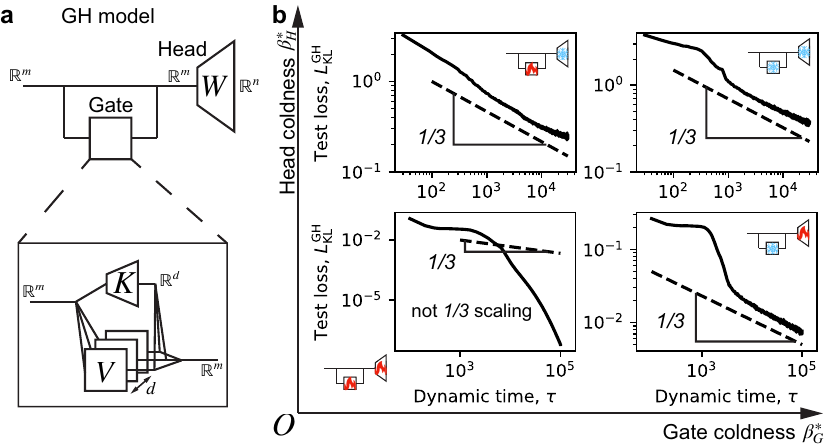}
\end{center}
\caption{In the gate-head (GH) model, any cold softmax, whether the head or the gate, leads to $1/3$ time scaling of loss. (a) The GH model has two parts: a gate layer and a head layer. The gate layer uses the key matrix $K$ and softmax to weight the value matrices. The head layer uses a matrix $W$ and softmax to project hidden states to distributions. (b) The GH model's loss exhibits $1/3$ time scaling whenever the target coldness $\beta_G^*$, $\beta_H^*$, or both are large. Experiment details in Appendix~\ref{sec:GHI}.}
\label{fig:toy}
\end{figure}

To study more than one softmax function as in LLMs while remaining as simple as possible for theoretical and experimental analysis, we propose gate-head (GH) models (Figure~\ref{fig:toy}a). Given input $x\in \mathbb{R}^m$ sampled as i.i.d. standard normal, the GH model outputs a probability distribution $q \in \mathbb{R}^n$,
\begin{equation}
    q(x) = \mathrm{Head}(\mathrm{Norm}(h(x))),~h(x) = \mathrm{Norm}(x) + \mathrm{Gate}(\mathrm{Norm}(x)),
    \label{eq:qhdef}
\end{equation}
where $\mathrm{Norm}(\cdot)$ is root mean square layer normalization. The gate layer contains a key matrix $K\in \mathbb{R}^{d\times m}$ and a value tensor $\tV \in \mathbb{R}^{m \times m \times d}$. Each slice of $\tV$, $\tV_{:,:,i} \in \mathbb{R}^{m \times m}$, is called a value matrix, and we have $d$ value matrices. For any input $x^G\in \mathbb{R}^m$ to the gate, the gate distribution is
\begin{equation}
    q^G = \mathrm{Softmax}(Kx^G) \in \mathbb{R}^d,
\end{equation}
which ``selects" the value matrices to use for the input $x^G$:
\begin{equation}
    \mathrm{Gate}(x^G)_i = \sum_{k} q_k^G \sum_{j} \tV_{ijk} x^G_j,~i = 1,2,...,m.
\end{equation}
Fixing $k$, $(\sum_{j} \tV_{ijk} x^G_j)_i \in \mathbb{R}^m$ is called a value vector. The head contains a matrix $W \in \mathbb{R}^{n \times m}$:
\begin{equation}
    \mathrm{Head}(x^H) = \mathrm{Softmax}(Wx^H) \in \mathbb{R}^n,~\forall x^H \in \mathbb{R}^m.
\end{equation}
To train the GH model, we use a teacher-student setup, where the student is as defined above and is trainable, and the teacher has the same architecture but with fixed parameters. We will use the subscript $*$ to denote teacher parameters (e.g., $W^*$ is the teacher head matrix) and replace $q$ with $p$ for distributions from the teacher ($p^G$ is the teacher distribution in its gate layer). Loss is defined as the Kullback–Leibler (KL) divergence between teacher and student outputs averaged over inputs,
\begin{equation}
    L^{\rm GH}_{\rm KL} = \left \langle \sum_{i=1}^n p_i(x) \ln \frac{p_i(x)}{q_i(x)} \right \rangle_x.
    \label{eq:GHKL}
\end{equation}  

Intuitively, an attention layer selects information from the context by a weighted sum of value vectors of previous tokens. Here, value vectors $(\sum_{j} \tV_{ijk} x_j^G)_i$ from different $k$ are weighted and summed similarly. The head layer copies the architecture of the LM head, which selects the next token to output.

The above GH model is not the only toy model that shares a heuristic similarity with LLMs. We tried different gate architectures, including the original attention, which led to the same phenomena (Appendix~\ref{app:toy}). To gain a concrete understanding, we will focus on a single GH model, after which generalizations will be natural.

The most important quantity to determine the degree of non-linearity is the magnitude of logits (i.e., inputs to softmax). For small logits, softmax can be Taylor expanded with respect to logits and is therefore approximately linear \citep{liu2026universal}. To control the scale of logits, we initialize the teacher key matrix and head matrix as
\begin{equation}
    K^* = \frac{1}{\sqrt{m}}\hat{K}\beta_G^*,~W^* = \frac{1}{\sqrt{m}}\hat{W}\beta_H^*,
    \label{eq:KW}
\end{equation}
respectively, where entries in $\hat{K}$ and $\hat{W}$ are i.i.d. standard normal. Since we have normalization layers, the gate logits and head logits have standard deviations $\beta_G^*$ and $\beta_H^*$, respectively. The teacher value matrices follow LeCun initialization \citep{lecun2002efficient}, not affecting the logit magnitude. Logit standard deviations of the student gate and head will be written as $\beta_G$ and $\beta_H$, respectively. For later convenience, we emphasize
\begin{tcolorbox}[colframe=mitred, opacityback=0.9, size=small, title=Important concepts]
$\bullet$ Logit standard deviation will also be called \textbf{inverse temperature} and \textbf{coldness}.

$\bullet$ A larger logit standard deviation means lower temperature and sharper distributions.
\end{tcolorbox}

Having introduced the GH model, we next study its training dynamics experimentally. We sweep both $\beta_G^*$ and $\beta_H^*$ over a wide range, from $1$ to $1000$, while fixing $m=32$, $d=8$, and $n=128$. We use stochastic gradient descent (SGD) for optimization (details in Appendix~\ref{sec:GHI}). Following \cite{liu2026universal}, we plot the loss against the effective gradient-flow time, the \textbf{dynamic time},
$\tau(t) = \sum_{t'=1}^{t} \eta_{t'},$
where $t$ is the training step and $\eta_{t'}$ is the learning rate at step $t'$. For a single softmax head, sufficiently large coldness leads to the asymptotic loss scaling $\sim\tau^{-1/3}$, i.e., $1/3$ time scaling \citep{liu2026universal,kuhn2026boundary}. We find that the GH model exhibits $\tau^{-1/3}$ scaling whenever $\beta_G^*$, $\beta_H^*$, or both are large. In contrast, when both operate at high temperature, the loss can converge substantially faster than the $1/3$ time scaling (Figure~\ref{fig:toy}b).
The observation suggests the following picture:
\begin{tcolorbox}[colframe=mitred, opacityback=0.9, size=small, title={Result 1: Any softmax can be a training bottleneck for $1/3$ scaling}]
Any softmax in the model, whether in the gate or the head, becomes a training bottleneck when it must produce a sufficiently peaked distribution. If at least one such bottleneck is present, the overall loss cannot decay faster than $1/3$ time scaling.
\end{tcolorbox}

\section{Toy model analysis}\label{sec:theory}

The key new insight should come from the regime where the gate operates at low temperature while the head remains at high temperature. When the gate coldness $\beta_G^*$ is small (left of Figure~\ref{fig:toy}b), the gate softmax can be linearized in its logits, reducing the problem to the previously solved single softmax case \citep{liu2026universal,kuhn2026boundary}. If $\beta_H^*$ is also small, the entire model is approximately linear, and the loss eventually converges exponentially. If instead $\beta_H^*$ is large, the student head must grow in norm to increase its coldness, leading to power-law vanishing gradients and the $1/3$ time scaling \citep{liu2026universal}. The remaining puzzle is therefore why a large gate coldness $\beta_G^*$ can also induce the $1/3$ time scaling (right of Figure~\ref{fig:toy}b). We first study the simplest such regime, with large $\beta_G^*$ and small $\beta_H^*$, before extending the analysis to the case where both are large.

\begin{figure}
\begin{center}
\includegraphics{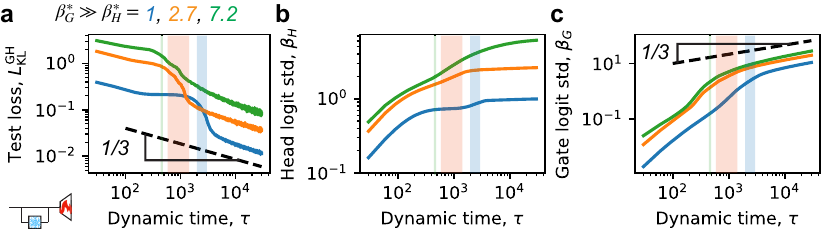}
\end{center}
\caption{When gate coldness is large but head coldness is low, the gate logit magnitude grows as a $1/3$ power law after the head saturates, sustaining $1/3$ loss scaling. (a) The late-time loss follows $1/3$ time scaling across different $\beta_H^*$ (represented by colors). Different loss curves enter their $1/3$ time scaling regions at different times, with the transitions marked by the vertical bands. (b) When loss enters the $1/3$ time scaling region, student head coldness saturates. (c) The $1/3$ scaling in loss also corresponds to student gate coldness growing as a $1/3$ power law. Details in Appendix~\ref{sec:GHI}.}
\label{fig:GH}
\end{figure}

We examine the regime of large $\beta_G^*$ and small $\beta_H^*$ in more detail. Using the same setup as in Figure~\ref{fig:toy}, we additionally track the standard deviations of the student logits as proxies for the weight dynamics (Appendix~\ref{sec:GHI}). Fixing $\beta_G^*=1000$, different small values of $\beta_H^*$ produce loss curves with the $1/3$ time scaling tails but different crossover times (Figure~\ref{fig:GH}a). Interestingly, the student head coldness $\beta_H$ saturates over the same time window in which the loss crosses over to the $1/3$ scaling regime (Figure~\ref{fig:GH}, a and b; colored bands). For $\beta_H^*=7.2$, the crossover is smoother, so we mark only its onset. Because the normalization layer makes $\beta_H$ depend only on the student head matrix $W$, saturation of $\beta_H$ indicates that $W$ has effectively converged. In contrast, the student gate coldness $\beta_G$ continues to grow after head saturation, approximately following a positive $1/3$ time scaling (Figure~\ref{fig:GH}c). These observations suggest that the $1/3$ power-law loss scaling is governed by a still-evolving gate (targeting large $\beta_G^*$) and an already converged head (arriving at small $\beta_H^*$).

We next turn to the gate dynamics. Once the student head $W$ has converged to $W^*$ and the student hidden state $h(x)$ (Eq.~(\ref{eq:qhdef})) is close to the teacher hidden state $h^*(x)$, the leading contribution to the loss $L^{\rm GH}_{\rm KL}$ should be quadratic in $h-h^*$. Its asymptotic scaling should therefore match that of the hidden-state mean squared error (MSE),
\begin{equation}
L^{\rm G}_{\rm Sh}
=
\langle
\|h(x)-h^*(x)\|^2 / m
\rangle_x.
\end{equation}
Thus, to understand the late-time scaling of $L^{\rm GH}_{\rm KL}$ after the head has converged, it is sufficient to study $L^{\rm G}_{\rm Sh}$ in a simplified model where only the gate is trained, i.e., the gate model.

\begin{figure}
\begin{center}
\includegraphics{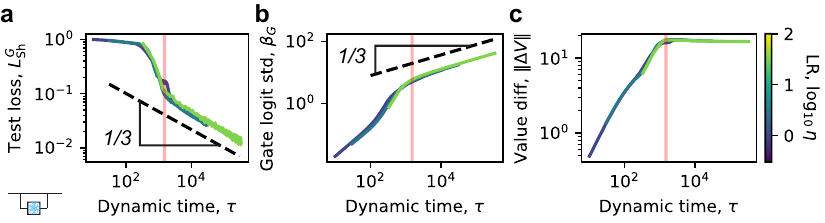}
\end{center}
\caption{Training only the gate with MSE between hidden states yields $1/3$ scaling in both loss and logit magnitude. (a) The loss enters the asymptotic $1/3$ time scaling regime, with the vertical line marking its onset. (b) The student coldness exhibits positive $1/3$ time scaling, beginning at the same time. (c) The value matrix displacement from initialization saturates as the loss and coldness enter their $1/3$ scaling regimes. Curves from different learning rates collapse when plotted against $\tau$, supporting $\tau$ as the fundamental variable governing dynamics. See Appendix~\ref{sec:GI} for details.}
\label{fig:G}
\end{figure}

To test this gate-only reduction, we train the gate using $L^{\rm G}_{\rm Sh}$ (Appendix~\ref{sec:GI}). The loss has the same asymptotic $1/3$ time scaling (Figure~\ref{fig:G}a), supporting the approximation. Meanwhile, the gate coldness grows with positive $1/3$ time scaling (Figure~\ref{fig:G}b). To determine which gate parameters remain active in this regime, we track the value tensor through its displacement from initialization, $\|\tV(\tau)-\tV(\tau = 0)\|$. This quantity saturates at the same time that the loss and gate coldness enter their $1/3$ scaling regimes (Figure~\ref{fig:G}c). Thus, the late-time dynamics are dominated by the non-convergent key matrix $K$, which controls $\beta_G$, while $\tV$ has effectively converged.

Once $\tV=\tV^*$, the moments of the value vectors are constant, so we expect the MSE between hidden states to have the same scaling as that between gate probabilities,
\begin{equation}
L^{\rm G}_{\rm Sh}
\propto L^{\rm G}_{\rm Sp} = 
\langle
\|q^G(x)-p^G(x)\|^2 / d
\rangle_x.
\end{equation}
We can therefore reduce the problem further: the origin of the $1/3$ time scaling in the full GH model can be understood by studying how training $K$ alone under $L^{\rm G}_{\rm Sp}$ yields the same scaling.

We next develop a theory for training $K$ with a large target $\beta_G^*$ (detailed derivations in Appendix~\ref{app:theory}). As explained in \cite{liu2026universal} and \cite{kuhn2026boundary}, when the norm is small, the student matrix $K$ rapidly aligns with the teacher, while the late-time dynamics are dominated by norm growth. We therefore consider the simplified aligned student $K(\tau) = \frac{1}{\sqrt{m}}\hat{K}\beta_G(\tau)$ under gradient flow dynamics, where $K^* = \frac{1}{\sqrt{m}}\hat{K}\beta_G^*$ was defined in Eq.~(\ref{eq:KW}). In this case, $\beta_G(\tau)$ effectively follows its own negative gradient,
\begin{equation}
\frac{\mathrm{d}\beta_G}{\mathrm{d}\tau} = -\frac{1}{d} \frac{\mathrm{d}L^{\rm G}_{\rm Sp}(\beta_G)}{\mathrm{d}\beta_G}.
\label{eq:GF}
\end{equation}
Since $K\propto K^*$, the loss now depends only on $\beta_G$. In the limit of large $\beta_G$ and $\beta_G^* \gg \beta_G$,
\begin{equation}
L^{\rm G}_{\rm Sp}(\beta_G) \approx \left \langle \frac{2e^{-2\beta_G \Delta \epsilon_G}}{d(1 + e^{-\beta_G \Delta \epsilon_G})^2} \right \rangle =\int_0^{\infty}\rho_{\Delta \epsilon_G}(\Delta \epsilon_G) \mathrm{d} \Delta \epsilon_G \frac{2e^{-2\beta_G \Delta \epsilon_G}}{d(1 + e^{-\beta_G \Delta \epsilon_G})^2}.
\end{equation}
Here, $\Delta \epsilon_G$ is the gap between the minimum and second-minimum energies, with energy defined as the normalized logit $\epsilon_G = -y_G / \beta_G$ for logits $y_G$. At large $\beta_G$, only $\Delta \epsilon_G$ contributes appreciably, as higher-energy gaps are exponentially suppressed. Each input therefore contributes one gap $\Delta \epsilon_G$, and the gap distribution $\rho_{\Delta \epsilon_G}(\cdot)$ reflects the data distribution. For sufficiently \textbf{complex and diverse} data, such that the inputs to $K$ vary continuously over the space, we expect $\rho_{\Delta \epsilon_G}(0) \neq 0$, which leads to the expansion
\begin{equation}
L^{\rm G}_{\rm Sp}(\beta_G) =\frac{\rho_{\Delta \epsilon_G}(0)}{\beta_G d}\int_0^{\infty} \mathrm{d} z \frac{2e^{-2z}}{(1 + e^{-z})^2} + o\left(\frac{1}{\beta_G}\right).
\label{eq:expand}
\end{equation}
In our setting, where the logits are i.i.d. Gaussian, $\rho_{\Delta \epsilon_G}(\cdot)$ is approximately an exponential distribution with rate $\sqrt{2\ln d}$. The leading term above therefore dominates when $\beta_G \gg \sqrt{2\ln d}$, making $\sqrt{2\ln d}$ a rough boundary between the high- and low-temperature regimes. In the low-temperature regime $\beta_G^* \gg \beta_G \gg \sqrt{2\ln d}$, $L^{\rm G}_{\rm Sp}(\beta_G)\sim 1/\beta_G$, which together with Eq.~(\ref{eq:GF}) leads to
\begin{equation}
\beta_G \sim \tau^{1/3},~L^{\rm G}_{\rm Sp}\sim \tau^{-1/3}.
\end{equation}
In this idealized theory, the dynamics depend only on $\tau$, consistent with the empirical collapse of curves obtained with different learning rates when plotted against $\tau$ (Figure~\ref{fig:G}).
We therefore identify the continued growth of the key matrix $K$ as the origin of the $1/3$ time scaling. Specifically, the power-law training dynamics arise because the non-linear gate, together with a complex data distribution, produces a loss that intrinsically vanishes as a power law.

The key to the emergent power law is the expansion, Eq.~(\ref{eq:expand}). Previous work \citep{liu2026universal} indicated that this inverse-coldness scaling arises specifically from the KL divergence between probabilities. Here, we show that MSE between probabilities exhibits the same scaling. More generally, replacing the MSE $\langle\|q^G-p^G\|^2/d\rangle$ with $\langle\|q^G-p^G\|^k/d\rangle$ for $k=4,6,8,\ldots$ changes only the integral coefficient (i.e., $\int_0^{\infty} \mathrm{d} z \cdots$ in Eq.~(\ref{eq:expand})), but not the exponent. In fact, the same inverse-coldness scaling holds for all ``reasonable" loss functions:
\begin{tcolorbox}[colframe=mitred, opacityback=0.9, size=small, title={Result 2: Loss universality (informal Theorem~\ref{thm:loss})}]
Consider aligned student and teacher with coldness $\beta$ and $\beta^*$, corresponding probabilities $q$ and $p$, and a non-zero density of zero energy gap. In the regime of large $\beta$ and $\beta^* \gg \beta$, \textbf{any reasonable loss}
$L=\langle D(q,p)\rangle$
scales as
$L\sim 1/\beta$,
\textbf{provided that} $D(q,p)$ is non-negative, vanishes iff $q=p$, and remains differentiable with bounded gradient even when $p$ is one-hot.
\end{tcolorbox}
The conditions for a ``reasonable" loss are mild and probably necessary for trainability. The emergent power law is therefore not a special property of KL divergence or MSE. Instead, it comes from generic ingredients: saturation of probabilities at large coldness and the presence of ambiguous cases in complex data (i.e., $\rho_{\Delta \epsilon_G}(0)\neq 0$).

We can now explain the $1/3$ time scaling of the final $L^{\rm GH}_{\rm KL}$ when the gate is at low temperature and the head is at high temperature more directly. The high-temperature part of the GH model converges rapidly, leaving $L^{\rm GH}_{\rm KL}$ effectively as a function of the non-convergent $q^G$. Although this dependence is complicated, the mapping from $q^G$ to the final probabilities is well-behaved, so $L^{\rm GH}_{\rm KL}$ remains a reasonable loss between $q^G$ and $p^G$. Loss universality gives $L^{\rm GH}_{\rm KL}\sim 1/\beta_G$, and gradient-based training then leads to the $1/3$ time scaling.
Our earlier reduction from $L^{\rm GH}_{\rm KL}$ to $L^{\rm G}_{\rm Sp}$ provided an intuitive explanation but relied on several approximations. Loss universality shows that the precise form of these approximations is irrelevant to the scaling exponent. A low-temperature softmax in a middle layer can therefore yield the same $1/3$ time scaling as a low-temperature softmax at the end.

The case where both $\beta_G^*$ and $\beta_H^*$ are large (Figure~\ref{fig:toy}b, upper right) follows naturally. After alignment, the weights other than $K$ and $W$ converge, while $K$ and $W$ continue to grow in norm and dominate the late-time loss. Analogous to a multi-variable Taylor expansion, $L^{\rm GH}_{\rm KL}$ can be decomposed at leading order into a contribution from imperfect $K$ with $W$ converged and a contribution from imperfect $W$ with $K$ converged (cross terms are higher order at late times). By loss universality, both leading contributions scale as $\tau^{-1/3}$, and therefore so does $L^{\rm GH}_{\rm KL}$.

This argument extends easily beyond the specific GH model. In a model with more softmax functions, each softmax that must learn a peaked distribution can contribute a $\tau^{-1/3}$ term to the late-time loss. Thus, the presence of one or more non-convergent softmax modules, regardless of their positions, can lead to the $1/3$ time scaling of the overall loss. Such bottlenecks in late-time training can be identified by coldness that continues to grow with positive $1/3$ time scaling. We therefore provide an answer to the Question in Section~\ref{sec:intro} on how multiple softmaxes affect the loss scaling.

\section{LLM experiments}\label{sec:llm}
Guided by the mechanism revealed in our toy models, we next ask whether the same picture can explain neural scaling laws in LLMs. Since we mainly focus on the original scaling laws \citep{kaplan2020scaling,hoffmann2022chinchilla}, which are reported from dense models with softmax attention \citep{vaswani2017attention}, we evaluate such models \citep{biderman2023pythia,olmo20242}.

\begin{figure}
\begin{center}
\includegraphics{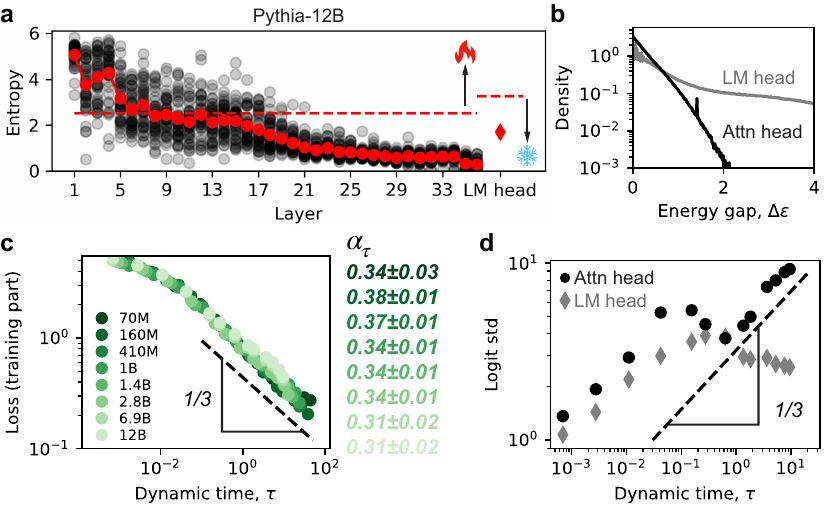}
\end{center}
\caption{In Pythia LLMs, attention heads are more strongly peaked than the LM head, with logit magnitudes growing as $1/3$ power laws, identifying attention as the bottleneck for $1/3$ loss scaling. (a) Entropy suggests that later-layer attention heads and the LM head are in the low-temperature regime (below the horizontal dashed lines). (b) Probability densities at zero energy gap are non-zero. (c) LLMs have losses following the $1/3$ time scaling. (d) Logit coldness of attention heads rather than the LM head grows as the positive $1/3$ time scaling at late training. Details in Appendix~\ref{app:eval}.} 
\label{fig:eval}
\end{figure}

We first ask whether softmax functions in LLMs learn peaked distributions. Our theory predicts a rough boundary between high- and low-temperature regimes for i.i.d. Gaussian logits (Appendix~\ref{sec:expansion}). Because LLM logits need not follow this assumption, we convert this boundary into an entropy threshold and compare it directly with LLM distributions (details in Appendix~\ref{app:eval}). Figure~\ref{fig:eval}a shows results for Pythia-12B \citep{biderman2023pythia}, with other models giving similar results (Appendix~\ref{app:eval}). Dark dots denote the mean attention entropy of individual heads, while red dots show layer averages. The red diamond attached to ``LM head'' marks the entropy of next-token prediction (Appendix~\ref{app:eval}). The dashed line indicates the corresponding high- or low-temperature boundary, with separate thresholds for attention and the LM head. We find that later-layer attention heads and the LM head lie in the low-temperature regime, with some attention heads reaching extremely low entropies.

We next test the second condition required by our theory for the $1/3$ time scaling: the probability density of the energy gap is non-zero at zero. Each input produces a set of logits and hence one energy gap, giving the gap distribution after evaluating the dataset (Appendix~\ref{app:eval}). For both attention heads and the LM head, the observed densities approach non-zero values as the gap tends to zero (Figure~\ref{fig:eval}b). For attention, we show one representative later-layer head from Pythia-12B, with similar behavior across other low-temperature heads (Appendix~\ref{app:eval}). We conclude that the ingredients in our toy models that lead to the $1/3$ time scaling are present in LLMs.

We next test whether LLMs exhibit the predicted power-law training dynamics. Following \cite{liu2026universal}, we evaluate checkpoints of open-source models and fit the raw loss as
\begin{equation}
   L = c_\tau\tau^{-\alpha_\tau} + L_{\backslash \tau},
   \label{eq:lossfit}
\end{equation}
where $L_{\backslash \tau}$ is the part of the loss irreducible by training (Appendix~\ref{app:eval}). For Pythia, we plot the training-dependent component $L-L_{\backslash \tau}$ in Figure~\ref{fig:eval}c, together with the fitted $\alpha_\tau$. Curves from different model sizes collapse when plotted against $\tau$, indicating that $\tau$ is the relevant training variable and that $c_\tau$ and $\alpha_\tau$ are approximately size-independent. The power-law fits are good, with exponents all close to $1/3$.
Our theory also predicts positive $1/3$ time scaling in some logit magnitudes, which is observed (Figure~\ref{fig:eval}d from Pythia-12B; more examples in Appendix~\ref{app:eval}): the growing logits occur in attention heads rather than the LM head, suggesting that attention is the late-training bottleneck. OLMo models \citep{olmo20242} show the same behaviors (Appendix~\ref{app:eval}). Together, all these agreements suggest that our theory may capture the mechanism underlying the neural scaling law of training, with attention as the likely source of the emergent power laws.

\cite{liu2026universal} indicated that the LM head is the bottleneck responsible for the $1/3$ loss scaling. We test this independently by training LLMs with MSE between final logits, which bypasses the output softmax. If that softmax were the only bottleneck, both logits and loss should converge much faster. We train a Pythia-160M student using a trained Pythia-160M model as the teacher providing target logits (Appendix~\ref{app:train}). As expected, the student LM head coldness saturates rather than exhibiting power-law growth (Figure~\ref{fig:distill}a). However, some attention-head coldness continues to grow approximately with positive $1/3$ time scaling, ruling out the LM head as the sole bottleneck. Fitting the MSE loss after LM head saturation (vertical line in Figure~\ref{fig:distill}a) with a power law plus a constant reveals a $1/3$ scaling tail in the training part of the MSE (Figure~\ref{fig:distill}b). These results identify attention as a training bottleneck and directly support loss universality in LLMs.

To summarize, in LLMs we tested the theoretical assumptions and verified predictions for both loss and internal logits. We further ran training experiments with MSE between logits, providing an independent test. The agreement across these analyses supports the following conclusion:
\begin{tcolorbox}[colframe=mitred, opacityback=0.9, size=small, title={Result 3: Relevance to LLMs}]
Power laws emerging from non-linearity are likely relevant to neural scaling laws in LLMs. In particular, attention concentrating toward peaked distributions is likely the late-training bottleneck for the $1/3$ power-law loss scaling.
\end{tcolorbox}

\begin{figure}
\begin{center}
\includegraphics{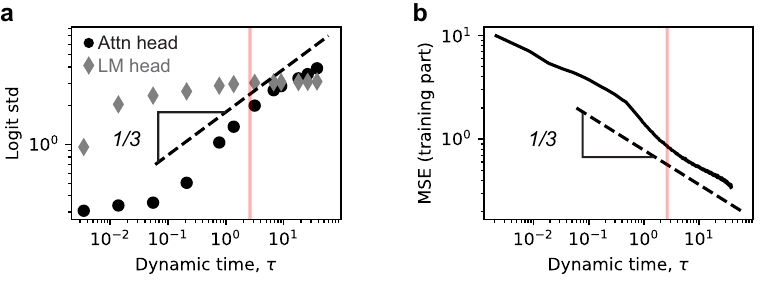}
\end{center}
\caption{Using MSE between final logits for training, attention logit magnitude grows like a positive $1/3$ scaling while the LM head saturates (panel a), ruling out the LM head as the only bottleneck for $1/3$ loss scaling. (b) MSE loss has a tail following the $1/3$ time scaling, as suggested by loss universality. Details in Appendix~\ref{app:train}.}
\label{fig:distill}
\end{figure}

\section{Related works}\label{sec:related}

Empirical studies first established that loss follows approximate power laws \citep{kaplan2020scaling, brown2020gpt3, henighan2020scaling, hoffmann2022chinchilla, openai2023gpt4}, with the Chinchilla scaling laws \citep{hoffmann2022chinchilla} providing more accurate estimates than \cite{kaplan2020scaling} by accounting for irreducible loss. Following \cite{liu2026universal}, in online learning with a fixed learning rate schedule shape, dynamic time is proportional to dataset size. The $1/3$ time scaling can therefore also be viewed as $1/3$ dataset size scaling, consistent with the exponent $0.28\sim0.37$ measured from Chinchilla data \citep{hoffmann2022chinchilla, besiroglu2024chinchilla}.

One branch of theory studies the optimal test loss achievable with a fixed number of training samples. These works attribute power-law scaling to data properties such as manifold dimension \citep{spigler2020asymptotic,hutter2021learning,sharma2022neural} or power-law covariance spectra \citep{bordelon2020spectrum,maloney2022solvable,bahri2024explaining,brill2024unifying}, leading to exponents that depend sensitively on data. Such high-level theories are difficult to test directly in LLMs. A more specific theory based on effective context horizons growing with dataset size \citep{cagnetta2026deriving} is testable, but its data analysis, like that of \cite{kaplan2020scaling}, ignores irreducible loss. The relevance of these theories to LLMs therefore may benefit from further justification.

For online, one-epoch training, theories of training dynamics are conceptually more relevant. Most works analyze linear models \citep{lin2024scaling, bordelon2025theory}, kernels \citep{bordelon2021learning,bordelon2024dynamical,worschech2024analyzing,paquette20244+,bordelon2025feature,defilippis2025scaling}, or high-level learning arguments \citep{michaud2023quantization,arora2023theory,liu2025physicsskilllearning} without treating non-linear effects explicitly. Their common picture is that more important modes are learned first, so power-law structures in the data, such as a power-law covariance spectrum, are required for power-law loss and determine the loss exponent.

Recent work on training dynamics \citep{liu2026universal} instead showed that softmax can yield power-law loss when learning peaked distributions, leading to a robust $1/3$ exponent across diverse data structures even without power laws in data. Our work generalizes this analysis from a single softmax to multiple softmax functions and identifies training attention as the bottleneck for $1/3$ loss scaling in LLMs. \cite{kuhn2026boundary} analyzed the single softmax case more rigorously by incorporating gradient noise and the alignment process, but did not test the theory against LLMs.

Observations of ``attention sinks", ``massive activations", etc. \citep{xiao2024efficient,sun2024massive,zucchet2025emergence,queipo2026attention} suggest that attention tries to concentrate, yet to our knowledge, this work first proposes that these observations may connect to neural scaling laws.

\section{Discussion}\label{sec:discuss}
From toy modeling, we identify learning peaked distributions as a mechanism for emergent power-law loss, regardless of where it occurs in the model. LLM loss and logit dynamics follow the predicted $1/3$ time scaling, suggesting that attention concentrating on specific tokens is the bottleneck underlying this power law. Thus, the tendency of attention to concentrate, a key feature of Transformers, may also underlie the neural scaling law of training.

Our work has several limitations. We focus on scaling with training time, leaving possible effects of non-linearity on model-size scaling unexplored \citep{liu2025superposition,liu2026inverse}. We analyze gradient-flow dynamics, neglecting gradient noise \citep{kuhn2026boundary}, optimizer-specific effects (although Adam shows similar time scaling; Appendix~\ref{app:adam}), and hyperparameter scaling \citep{bergsma2025power}, all of which may affect the mapping between dynamic time and dataset size. We also focus on the original neural scaling laws, while newer architectures such as mixture of experts (MoE) may modify their form \citep{clark2022unified}. For training-time scaling, however, we expect MoE to retain the $1/3$ exponent: expert routing introduces another softmax that may itself become another bottleneck, while increased capacity lowers the achievable loss. These limitations, together with our findings, open directions for future work.

Our results suggest a broader principle for improving LLM scaling. The ``bitter lesson" \citep{sutton2019bitter} favors flexible mechanisms such as attention, which gives LLMs great freedom in modeling context dependence. Yet language may not require such freedom uniformly: only a small fraction of context is often relevant, forcing attention to concentrate and, in our view, producing slow power-law training. Structural priors might therefore accelerate learning by reducing how sharply attention must focus, without sacrificing expressivity. Alternatively, intelligence may fundamentally require selecting a few relevant elements from a vast space of possibilities. Such non-linear concentration, and the associated $1/3$ scaling, may thus be unavoidable. We anticipate that distinguishing between these possibilities could enable more efficient scaling or reveal deeper principles underlying language and intelligence.

\subsection*{AI use statement}

In this work, we used generative AI tools for general brainstorming, formulating mathematical claims, writing code with clear instructions, drafting appendices based on code, and improving the main text readability. We did not use generative AI tools to develop theoretical models or conceptual frameworks, prove mathematical claims, write proofs, or perform data analysis. The rest of the required disclosure tasks are not applicable to this work. We have reviewed all AI-assisted work. We judged AI-generated ideas, verified mathematical claims manually, and wrote proofs manually at the end with our own logic. LLM-generated code was verified and tested for correctness by the authors. Data analysis was done manually by the authors. We take responsibility for the final content of this work, including text, claims, or artifacts produced with the aid of generative AI.

\subsection*{Reproducibility statement}

We explain the logic of experiments and data analysis in appendices, with clear references to the corresponding code files. Code to reproduce all of the results is available at \url{https://github.com/liuyz0/AttnScaling}. For the theoretical results, clear explanations of assumptions and complete proofs of the claims can be found in the appendices.


\bibliography{iclr2027_conference}
\bibliographystyle{iclr2027_conference}

\appendix
\section{Toy model experiments}\label{app:toy}
In the main text, we introduced the GH model and its key results. We hereafter refer to this model as the type-I GH model, or GH-I. In this appendix, we first provide details of the training and analysis of GH-I. We then introduce two additional variants, GH-II and GH-III, which differ from GH-I in their gate architectures. Analysis of GH-I suggests the origin of the emergent $1/3$ scaling as a saturating non-linear effect not specific to its architecture. Experiments on GH-II and GH-III further support the generality of this mechanism, pointing toward more fundamental mathematical principles.

\subsection{GH-I model (the main-text GH model)}\label{sec:GHI}
The basic architecture and setup of the GH-I model have been described in the main text (Section~\ref{sec:toy}).
\subsubsection{Experiment methods}\label{sec:GHI_methods}
The full implementation is in \texttt{exp-0-1.py}.
The architecture is as described in Section~\ref{sec:toy}, with $m=32$, $d=8$, and $n=128$.
The teacher key and head matrices are set according to Eq.~(\ref{eq:KW}), and the teacher value tensor $\tV^*$ follows LeCun initialization.
The student key and head matrices are initialized to zero, and the student value tensor is initialized to a smaller scale than LeCun initialization.

In each training step, inputs $x \in \mathbb{R}^m$ are sampled i.i.d.\ from the standard normal distribution with batch size $2048$. The training loss is the KL divergence $L^{\rm GH}_{\rm KL}$ (Eq.~(\ref{eq:GHKL})). The optimizer is SGD with a constant learning rate $\eta$. We train for $10{,}000$ steps, sweeping $\beta_G^*$ and $\beta_H^*$ each over 8 logarithmically spaced values from $1$ to $1000$, and $\eta$ over $\{3, 10, 30\}$, giving $192$ runs in total. The dynamic time is $\tau = \eta t$ for a constant learning rate, where $t$ is the training step.

Every $10$ steps, the test loss is evaluated on $4$ fresh batches of $2048$ samples and averaged. Three additional quantities are tracked: the student head coldness $\beta_H = \|W\| / \sqrt{n}$, the student gate coldness $\beta_G = \|K\| / \sqrt{d}$, and the value tensor displacement from initialization $\|\tV(t) - \tV(0)\|$. These are consistent with the main text definitions, since the normalization layers make the logit standard deviations proportional to the respective matrix norms. All quantities are saved per run to a result \texttt{.pt} file.

\subsubsection{Data analysis}
All 64 combinations of $(\beta_G^*, \beta_H^*)$ were inspected by plotting $8\times 8$ panels of test loss curves against $\tau$ on log-log axes, with three learning rates overlaid per panel (see \texttt{exp-0-1.ipynb}). The dynamics of $\beta_H$ and $\beta_G$ were similarly scanned across all combinations. These scans confirm that, within the large-$\beta^*$ regime, behaviors across different parameter values are qualitatively consistent. The curves shown in the figures are therefore representative of their regimes, not particular cases.

For Figure~\ref{fig:toy}b, we selected the four corners of the $(\beta_G^*, \beta_H^*)$ sweep, where each axis takes its minimum value $1$ or maximum value $1000$. Each of the four panels plots $L^{\rm GH}_{\rm KL}$ against $\tau$ on log-log axes with one representative learning rate (see \texttt{exp-0-1.ipynb}) and a dashed reference line of slope $-1/3$.

For Figure~\ref{fig:GH}, we fix $\beta_G^* = 1000$ and vary $\beta_H^*$ over its three smallest sweep values, $\beta_H^* \approx 1, 2.7, 7.2$, with learning rate $\eta = 3$. Panels a, b, and c plot $L^{\rm GH}_{\rm KL}$, $\beta_H$, and $\beta_G$ against $\tau$ on log-log axes, respectively. The quantities $\beta_H = \|W\|/\sqrt{n}$ and $\beta_G = \|K\|/\sqrt{d}$ are the saved proxies described in Appendix~\ref{sec:GHI_methods}, with no further processing. Dashed reference lines of slope $-1/3$ and $+1/3$ are drawn for comparison in panels a and c, respectively. The vertical bands marking the transition to the $1/3$ scaling regime are drawn consistently across all three panels. For $\beta_H^* \approx 7.2$, the crossover is smoother and longer, so a single vertical line marks only its onset.

\subsection{G-I model}\label{sec:GI}
The G-I model is the gate layer of GH-I trained in isolation, as described in the main text (Section~\ref{sec:theory}). The full implementation is in \texttt{exp-1-1.py}.

\subsubsection{Experiment methods}\label{sec:GI_methods}
The G-I model uses the gate layer architecture with the same hyperparameters as GH-I ($m=32$, $d=8$), but without the head layer. The teacher key matrix $K^*$ is set according to Eq.~(\ref{eq:KW}), and the teacher value tensor $\tV^*$ follows LeCun initialization. The student $K$ is initialized to zero, and the student $\tV$ is initialized to a smaller scale than LeCun initialization.

In each training step, inputs are sampled i.i.d.\ from the standard normal distribution with batch size $2048$. The training loss is $L^{\rm G}_{\rm Sh}$, the MSE between student and teacher hidden states (main text, Section~\ref{sec:theory}). The optimizer is SGD with constant learning rate $\eta$. We train for $10{,}000$ steps, sweeping $\beta_G^*$ over 8 logarithmically spaced values from $1$ to $1000$ and $\eta$ over $\{0.3, 1, 3, 10, 15, 25, 33, 100\}$, giving $64$ runs in total. The dynamic time is $\tau = \eta t$. Every $10$ steps, the test loss is evaluated on $4$ fresh batches and averaged. The student gate coldness $\beta_G = \|K\|/\sqrt{d}$ and the value tensor displacement $\|\tV(t) - \tV(0)\|$ are also recorded.

\subsubsection{Data analysis}
All 8 values of $\beta_G^*$ were inspected by plotting test loss, $\beta_G$, and $\|\tV(t)-\tV(0)\|$ against $\tau$ on log-log axes, with all 8 learning rates overlaid per panel (see \texttt{exp-1-1.ipynb}). These scans confirm that behaviors in the large-$\beta_G^*$ regime are qualitatively consistent, so the selected curves are representative.

For Figure~\ref{fig:G}, we fix $\beta_G^* = 1000$ and show three representative learning rates $\eta = 1, 3, 33$, color-coded by $\log_{10}\eta$. Panels a, b, and c plot $L^{\rm G}_{\rm Sh}$, $\beta_G$, and $\|\tV(t)-\tV(0)\|$ against $\tau$ on log-log axes, respectively. Dashed reference lines of slope $-1/3$ and $+1/3$ are drawn in panels a and b; panel c has no reference slope, as it shows saturation rather than a power-law trend. The vertical line at $\tau = 1500$ marking the onset of the $1/3$ scaling regime is consistent across all three panels. Curves from different learning rates collapse when plotted against $\tau$, supporting $\tau$ as the fundamental variable governing the dynamics.

\subsection{Adam optimizer}\label{app:adam}

Adam is considerably harder to analyze theoretically than gradient flow, the continuous-time limit that underlies the analysis of SGD. The core difficulty is that Adam maintains per-parameter adaptive learning rates based on gradient moments, making the exact dynamics of $\beta$ analytically intractable. We therefore take an empirical approach and run the same GH-I and G-I experiments under Adam. The results show qualitatively identical phenomena: loss and $\beta$ follow approximate $1/3$ power laws in both models. The effective dynamics of $\beta$ appear similar under Adam and SGD, and differences at the level of individual parameter updates appear to be higher-order corrections that do not change the dominant scaling exponent.

\subsubsection{GH-I extra experiments}

\begin{figure}
\begin{center}
\includegraphics{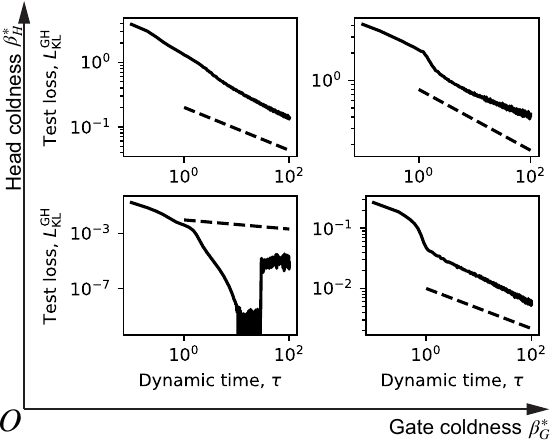}
\end{center}
\caption{Under Adam, the GH model's loss exhibits approximate $1/3$ time scaling whenever the target coldness $\beta_G^*$, $\beta_H^*$, or both are large. Each panel shows the test loss $L_{\rm KL}^{\rm GH}$ against dynamic time $\tau$ on log-log axes, with a dashed reference line of slope $-1/3$. The $2\times 2$ layout arranges the four corners of the $(\beta_G^*, \beta_H^*)$ sweep, with gate coldness increasing rightward and head coldness increasing upward. When both coldnesses are small (bottom left), the loss initially descends rapidly as in the SGD case, but Adam becomes unstable at late times. In the remaining three panels, approximate $1/3$ loss scaling is observed, consistent with the SGD results in Figure~\ref{fig:toy}b.}
\label{fig:toyadam}
\end{figure}

The GH-I Adam experiments use the same architecture, hyperparameters, and sweep design as Appendix~\ref{sec:GHI_methods}, with one modification: Adam replaces SGD. The learning rate is swept over $\{0.01, 0.03, 0.1\}$, giving 192 runs in total. The dynamic time is defined as $\tau = \eta t$, where $\eta$ is the nominal learning rate.

All 64 combinations of $(\beta_G^*, \beta_H^*)$ were inspected by plotting $8\times 8$ panels of test loss curves against $\tau$ on log-log axes, with three learning rates overlaid per panel (see \texttt{exp-0.ipynb}). The dynamics of $\beta_H$ and $\beta_G$ were similarly scanned across all combinations. Curves from different learning rates approximately collapse when plotted against $\tau$, though less tightly than under SGD. Within the large-$\beta^*$ regime, behaviors across different parameter values are qualitatively consistent, so the curves shown in the figures are representative.

Figure~\ref{fig:toyadam} reproduces the four-corner selection of Figure~\ref{fig:toy}b under Adam, using learning rate $\eta = 0.01$ and a dashed reference line of slope $-1/3$. Approximate $1/3$ loss scaling is observed whenever $\beta_G^*$, $\beta_H^*$, or both are large, consistent with the SGD results.

\begin{figure}
\begin{center}
\includegraphics{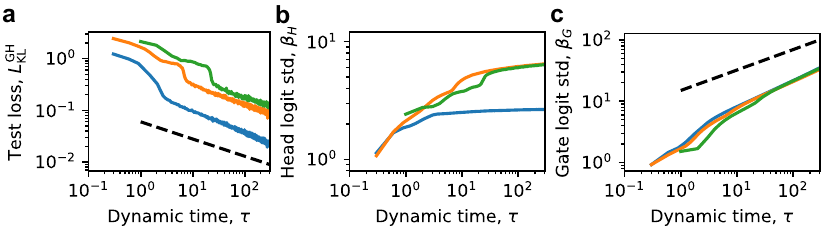}
\end{center}
\caption{Under Adam, when gate coldness is large but head coldness is small, the head saturates and the gate logit magnitude subsequently grows approximately as $\tau^{1/3}$, sustaining approximate $1/3$ loss scaling. Three representative runs are shown, color-coded by run. (a) Test loss $L_{\rm KL}^{\rm GH}$ against $\tau$; the dashed line has slope $-1/3$. (b) Head logit standard deviation $\beta_H$ against $\tau$; $\beta_H$ grows initially then saturates. (c) Gate logit standard deviation $\beta_G$ against $\tau$; the dashed line has slope $+1/3$. The sequence of head saturation followed by gate growth and continued loss decay mirrors the SGD behavior in Figure~\ref{fig:GH}, though the alignment with the reference slopes is less tight.}
\label{fig:GHAdam}
\end{figure}

Figure~\ref{fig:GHAdam} shows three panels plotting $L^{\rm GH}_{\rm KL}$, $\beta_H$, and $\beta_G$ against $\tau$ on log-log axes. Three representative runs are shown: $(\beta_G^*, \beta_H^*, \eta) \in \{(373, 2.7, 0.03),\, (1000, 7.2, 0.03),\, (373, 7.2, 0.1)\}$, all with large $\beta_G^*$ and small $\beta_H^*$, color-coded by blue, orange, and green, respectively. Dashed reference lines of slope $-1/3$ and $+1/3$ are drawn in the loss and $\beta_G$ panels, respectively. As with SGD, $\beta_H$ saturates early and $\beta_G$ subsequently grows approximately as a $\tau^{1/3}$ power law, consistent with the gate becoming the bottleneck.

These results show that the phenomena observed under SGD persist under Adam: approximate $1/3$ loss scaling emerges whenever a softmax learns peaked distributions, and the gate-bottleneck mechanism is visible in the logit dynamics. While the theoretical analysis relies on gradient flow, the empirical agreement under Adam suggests the underlying mechanism may not depend sensitively on the optimizer.

\subsubsection{G-I extra experiments}

The G-I Adam experiments use the same architecture, hyperparameters, and sweep design as the G-I experiment in Appendix~\ref{sec:GI_methods}, with one modification: Adam replaces SGD. The learning rate is swept over $\{0.001, 0.003, 0.01, 0.03, 0.05, 0.08, 0.1, 0.3\}$, giving 64 runs in total. The dynamic time is defined as $\tau = \eta t$.

All 8 values of $\beta_G^*$ were inspected by plotting test loss, $\beta_G$, and $\|\tV(t)-\tV(0)\|$ against $\tau$ on log-log axes, with all 8 learning rates overlaid per panel (see \texttt{exp-1.ipynb}). Curves from different learning rates approximately collapse when plotted against $\tau$. Within the large-$\beta_G^*$ regime, behaviors are qualitatively consistent, so the selected curves are representative.

\begin{figure}
\begin{center}
\includegraphics{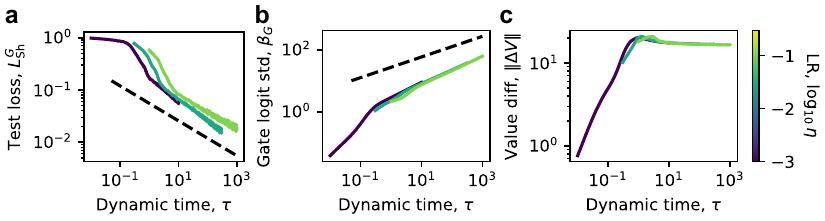}
\end{center}
\caption{Under Adam, training only the gate with MSE loss yields approximate $1/3$ scaling in both loss and gate logit magnitude, mirroring the SGD results in Figure~\ref{fig:G}. We fix $\beta_G^* = 1000$ and show three representative learning rates, color-coded by $\log_{10}\eta$ from dark purple to light green. (a) Test loss $L_{\rm Sh}^{\rm G}$ against $\tau$; the dashed line has slope $-1/3$. (b) Gate logit standard deviation $\beta_G$ against $\tau$; the dashed line has slope $+1/3$. (c) Value tensor displacement $\|\tV(t)-\tV(0)\|$ against $\tau$; $\|\tV(t)-\tV(0)\|$ rises and then saturates. Curves from different learning rates collapse well when plotted against $\tau$.}
\label{fig:GAadm}
\end{figure}

For Figure~\ref{fig:GAadm}, we fix $\beta_G^* = 1000$ and show three representative learning rates $\eta = 0.001$, $0.03$, and $0.1$, color-coded by $\log_{10}\eta$ from dark purple to light green. Panels a, b, and c plot $L^{\rm G}_{\rm Sh}$, $\beta_G$, and $\|\tV(t)-\tV(0)\|$ against $\tau$ on log-log axes. Dashed reference lines of slope $-1/3$ and $+1/3$ are drawn in panels a and b; panel c has no reference slope. As with SGD, loss decays and $\beta_G$ grows approximately as $\tau^{1/3}$ power laws, and $\|\tV(t)-\tV(0)\|$ rises and saturates.

The G-I model under Adam reproduces the same phenomena as under SGD: approximate $1/3$ loss scaling and $\beta_G$ growth driven by the gate. Together with the GH-I Adam results, this confirms that the $1/3$ scaling phenomena are not specific to SGD.

\subsection{GH-II model}\label{sec:GHII}

\begin{figure}
\begin{center}
\includegraphics{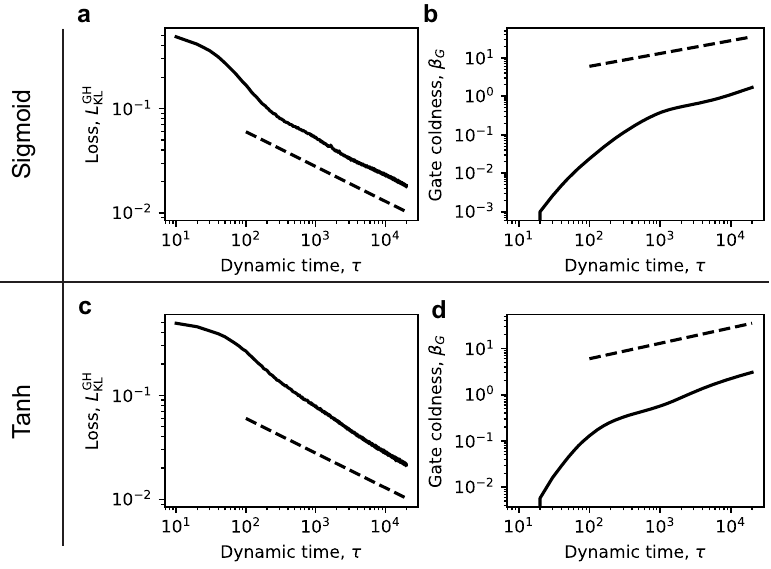}
\end{center}
\caption{The GH-II model reproduces $1/3$ time scaling in loss and gate coldness for both sigmoid and tanh gate non-linearities ($\beta_G^*=1000$, $\beta_H^*=1$). Each pair of panels shows $L^{\rm GH}_{\rm KL}$ (left) and $\beta_G$ (right) against $\tau$ on log-log axes. (a,b) Sigmoid variant. (c,d) Tanh variant. Dashed reference lines have slope $-1/3$ (panels a,c) and $+1/3$ (panels b,d).}
\label{fig:GHII}
\end{figure}

The GH-I model (the main-text GH model) and its gate-only reduction established that any low-temperature softmax in the model can become the training bottleneck, producing $1/3$ time scaling in loss. The theoretical explanation (Section~\ref{sec:theory}) hinges on loss universality and the $1/\beta_G$ loss scaling in the low-temperature regime, both of which depend on the non-linear property of the gate, not on its specific form. This raises the question of whether the same phenomena appear when the gate uses a different architecture. GH-II tests this with a modified gate, using the same head and training protocol as GH-I.

GH-II uses the same overall structure as GH-I (Eq.~(\ref{eq:qhdef})), including the head layer and residual connection. The gate function $\mathrm{Gate}(\cdot)$ is modified: it now contains a key matrix $K \in \mathbb{R}^{d \times m}$, a gate matrix $G \in \mathbb{R}^{d \times m}$, and a value matrix $V \in \mathbb{R}^{m \times d}$, with output
\begin{equation}
    \mathrm{Gate}(x^G) = V\bigl(Kx^G \odot \sigma(Gx^G)\bigr),
\end{equation}
where $\odot$ denotes elementwise multiplication and $\sigma$ is applied elementwise. Two variants are studied: $\sigma = \mathrm{Sigmoid}$ (\texttt{test-2-1.ipynb}) and $\sigma = \mathrm{Tanh}$ (\texttt{test-2-2.ipynb}), with initialization and learning rate details in the notebooks. The gate coldness is $\beta_G = \|G\|/\sqrt{d}$, and the teacher gate coldness $\beta_G^*$ controls the scale of the teacher gate matrix $G^*$. The coldness $\beta_G$ therefore controls the magnitude of $Gx^G$ before the non-linearity, determining how saturated the outputs of $\sigma$ will be. We use $m=32$, $d=96$, $n=128$, and fix $\beta_G^*=1000$. The training loss is $L^{\rm GH}_{\rm KL}$, the optimizer is SGD with a constant learning rate $\eta$, the dynamic time is $\tau = \eta t$, and we log the loss and $\beta_G$.

We run both GH-II variants in the regime of large gate coldness ($\beta_G^* = 1000$) and small head coldness ($\beta_H^* = 1$). Both the $\mathrm{Sigmoid}$ and $\mathrm{Tanh}$ variants exhibit $\tau^{-1/3}$ loss scaling (Figure~\ref{fig:GHII}, panels a and c) and $\tau^{1/3}$ gate coldness growth (panels b and d), reproducing the same phenomena as GH-I. The $1/3$ scaling is therefore more general, extending beyond the specific softmax non-linearity.

The Sigmoid results admit a direct theoretical interpretation: $\mathrm{Sigmoid}(y) = e^y/(e^y+1)$ is exactly the softmax probability of the more likely class in a binary problem, so each of the $d$ gate components is a 2-class softmax. The theory of Section~\ref{sec:theory} therefore applies. With $\beta_G^*=1000$ controlling the scale of all rows of $G^*$, all $d$ sigmoid gates are in the low-temperature regime. Once $K$ and $V$ converge, the overall loss is a reasonable function of the $d$ gate activations $\mathrm{Sigmoid}(Gx^G)$. Loss universality then gives $L^{\rm GH}_{\rm KL}\sim 1/\beta_G$, which together with gradient flow dynamics yields $\tau^{-1/3}$ loss and $\tau^{1/3}$ gate coldness growth, as observed. The Tanh variant follows by the same argument: $\mathrm{Tanh}(y) = 2\mathrm{Sigmoid}(2y)-1$ is a linear rescaling of sigmoid, so it shares the same saturation behavior as a function of $\beta_G$, and the low-temperature expansion and loss universality apply identically.

In summary, GH-II replaces the softmax gate of GH-I with sigmoid and tanh non-linearities and finds that both reproduce $1/3$ time scaling in loss and gate coldness. Theoretical analysis connects these results directly to the softmax case via the binary-softmax equivalence and loss universality. Taken together with GH-I, these three concrete examples (softmax, sigmoid, and tanh) consistently exhibit the same $1/3$ scaling across different gate architectures and non-linearities. This suggests a \textbf{non-linearity universality} analogous to loss universality: just as the $1/3$ scaling does not depend on the loss form, it also does not depend on the specific non-linearity or gate architecture, pointing toward a more general mathematical principle whose complete characterization is a direction for future work. 

Notably, the GH-II gate is structurally a gated linear unit (GLU), which is widely adopted in modern LLMs as a feed-forward layer variant. If such a GLU gate operates in the low-temperature regime, it would become yet another training bottleneck, making the $1/3$ loss scaling harder to escape. The benefit of GLU over standard feed-forward layers may lie in reducing the irreducible loss through more expressive gating. However, convergence to that lower loss floor may still follow the same $1/3$ power law.

\subsection{GH-III model}

GH-I and GH-II demonstrated the $1/3$ time scaling with gate architectures that, while realistic, are simpler than the full attention mechanism used in LLMs. GH-III asks whether the same phenomena persist when the gate is replaced by real attention, providing a more direct connection between the toy model and the original Transformers.

GH-III uses the same overall structure as GH-I (the main text GH model), including the head layer and residual connection, but takes $x \in \mathbb{R}^{m \times d}$ as input, where $d$ is the context length ($d=8$ in our experiments). The gate function $\mathrm{Gate}(\cdot)$ is replaced by single-head attention computing the output for the last token. The input to the gate layer is $x^G = \mathrm{Norm}(x) \in \mathbb{R}^{m \times d}$, and the residual connection and the head act on the last token only, so $h = \mathrm{Norm}(x_d) + \mathrm{Gate}(x^G) \in \mathbb{R}^m$, where $x_d$ is the last token. A combined weight matrix $W^{QKV} \in \mathbb{R}^{3m \times m}$ projects each token of $x^G$ into query, key, and value vectors, giving $Q = [\boldsymbol{q}_1,\ldots,\boldsymbol{q}_d]$, $K = [\boldsymbol{k}_1,\ldots,\boldsymbol{k}_d]$, $V = [\boldsymbol{v}_1,\ldots,\boldsymbol{v}_d] \in \mathbb{R}^{m \times d}$ (here $K$ and $V$ denote the stacked key and value vectors of the tokens, not the weight matrices of GH-I). The gate output is
\begin{equation}
    \mathrm{Gate}(x^G) = V\,\mathrm{Softmax}\!\left(\frac{K^T \boldsymbol{q}_d}{\sqrt{m}}\right) \in \mathbb{R}^m,
\end{equation}
where $\boldsymbol{q}_d$ is the last token's query. The gate coldness $\beta_G$ is the standard deviation of the attention logits, averaged over the batch, and the teacher coldness is $\beta_G^*=1000$. We use $m=32$, $d=8$, $n=128$, $\beta_H^*=1$, and train with AdamW with dynamic time $\tau=\eta t$. The loss and $\beta_G$ are logged, with initialization and learning rate details in \texttt{test-3-3.ipynb}.

We run GH-III in the regime of large gate coldness ($\beta_G^* = 1000$) and small head coldness ($\beta_H^* = 1$). The loss exhibits $\tau^{-1/3}$ scaling (Figure~\ref{fig:GHIII}a) and the gate coldness grows as $\tau^{1/3}$ (Figure~\ref{fig:GHIII}b), reproducing the same phenomena as GH-I. The $1/3$ time scaling therefore persists with real attention as the gate.

The GH-III gate is conceptually more complex than GH-I: the attention logits involve both $K$ and $Q$, and the $d$-dimensional context introduces additional structure absent in GH-I's single-vector input. These complexities are one reason GH-I, with its simpler gate, was chosen as the main-text model for theoretical analysis. However, once the student $Q$, $K$, $V$ matrices are aligned with the teacher (ensured here by initialization), the extra structure does not affect the leading-order behavior. The loss reduces to a function of $\beta_G$ alone, and the low-temperature expansion gives $L^{\rm GH}_{\rm KL} \sim 1/\beta_G$ by the same argument as in GH-I. If $\beta_G$ then follows effective gradient flow dynamics, the same $\tau^{1/3}$ growth follows naturally, explaining the $1/3$ scaling in both $\beta_G$ and loss.

In summary, GH-III confirms that the $1/3$ time scaling persists when the gate is replaced by real attention, with the theoretical argument carrying over through the aligned-student assumption. GH-I was presented in the main text because its simpler gate makes the essential mechanism (a low-temperature softmax as a training bottleneck) transparent and analytically tractable. Once this mechanism is understood from GH-I, the extension to GH-III follows naturally, and GH-III then provides the direct experimental link between the toy model and real Transformer attention. The resulting logic chain, from GH-I to GH-III to the LLM experiments in the main text, forms a coherent and increasingly realistic case that attention trying to concentrate is at the origin of the emergent $1/3$ scaling law.

\begin{figure}
\begin{center}
\includegraphics{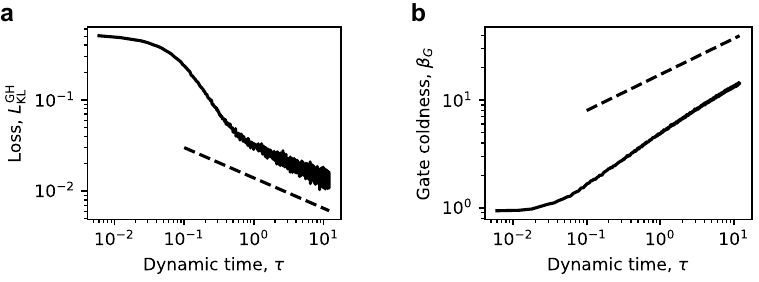}
\end{center}
\caption{The GH-III model with real attention as the gate reproduces $1/3$ time scaling in loss and gate coldness ($\beta_G^*=1000$, $\beta_H^*=1$). (a) The loss $L^{\rm GH}_{\rm KL}$ follows $\tau^{-1/3}$ scaling. (b) The gate coldness $\beta_G$ grows as $\tau^{1/3}$. Dashed reference lines have slope $-1/3$ (panel a) and $+1/3$ (panel b).}
\label{fig:GHIII}
\end{figure}

We also note a structural connection between GH-I (main text GH model) and MoE architectures: the GH-I gate uses $q^G = \mathrm{Softmax}(Kx^G)$ to weight $d$ value matrices, in the same way MoE routing selects among experts. If the routing distribution becomes peaked as the model learns to specialize experts, our analysis predicts that the routing gate will become a training bottleneck with a $1/3$ time-scaling contribution to the loss. As with GLU (Appendix~\ref{sec:GHII}), the benefit of MoE is increased model capacity and a lower achievable loss. However, convergence to that lower loss floor may still follow the same $1/3$ power law. This supports the prediction in Section~\ref{sec:discuss} that MoE architectures are likely to retain the $1/3$ scaling exponent.

\section{Theory}\label{app:theory}
In this section, we provide full details of our theoretical derivations.
Appendices \ref{sec:gflow}, \ref{sec:aligned}, and \ref{sec:expansion} elaborate on the gate dynamics theory in Section~\ref{sec:theory}.
Appendix~\ref{sec:theorem} generalizes the setup and proves loss universality.

\subsection{Gradient flow}\label{sec:gflow}
In Appendices \ref{sec:gflow}, \ref{sec:aligned}, and \ref{sec:expansion}, we analyze the G-I model (gate layer of GH-I or the main text GH model) with all notations used identical to the main text. As described in Section~\ref{sec:theory}, we focus on the training dynamics of the key matrix $K$ using MSE between gate probabilities, $L^{\rm G}_{\rm Sp}$, as loss. Essentially, our analysis is on a linear layer followed by a softmax non-linearity, similar to \cite{liu2026universal} and \cite{kuhn2026boundary}.

The gradient descent update of the student $K$ is given by
\begin{equation}
    K_{t+1} = K_t - \eta_t \nabla_{K} L^{\rm G}_{\rm Sp},
\end{equation}
where $\nabla_K$ takes the gradient with respect to $K$.
In the continuous limit of small $\eta_t$, with $\tau(t) = \sum_{t'=1}^{t} \eta_{t'} \approx \int_0^t \eta_{t'}\mathrm{d}t'$, the step $\eta_t$ plays the role of $\mathrm{d}\tau$, and we obtain the gradient flow dynamics
\begin{equation}
    \frac{\mathrm{d}K}{\mathrm{d}\tau}= - \nabla_{K} L^{\rm G}_{\rm Sp}.
\end{equation}

\subsection{Aligned student}\label{sec:aligned}
As explained in \cite{liu2026universal}, when the norm is small, the student matrix $K$ rapidly aligns with the teacher, while the late-time dynamics are dominated by norm growth. We therefore consider the simplified aligned student $K(\tau) = \frac{1}{\sqrt{m}}\hat{K}\beta_G(\tau)$ under gradient flow dynamics, where $K^* = \frac{1}{\sqrt{m}}\hat{K}\beta_G^*$ was defined in Eq.~(\ref{eq:KW}). To derive the effective dynamics of $\beta_G$, we substitute this aligned $K$ into the gradient flow of $K$. We then have
\begin{equation}
    \frac{\hat{K}}{\sqrt{m}}\frac{\mathrm{d}\beta_G}{\mathrm{d}\tau}= - \nabla_{K} L^{\rm G}_{\rm Sp},
\end{equation}
and then
\begin{equation}
    \frac{\mathrm{Tr}(\hat{K}\hat{K}^T)}{m}\frac{\mathrm{d}\beta_G}{\mathrm{d}\tau}= - \mathrm{Tr}\left(\nabla_{K} L^{\rm G}_{\rm Sp}\frac{\hat{K}^T}{\sqrt{m}}\right).
\end{equation}
Noting that $\frac{\mathrm{d}K}{\mathrm{d}\beta_G} = \frac{\hat{K}}{\sqrt{m}}$, the right side of the above equation is $\nabla_{\beta_G} L^{\rm G}_{\rm Sp}$,
\begin{equation}
    \frac{\mathrm{Tr}(\hat{K}\hat{K}^T)}{m}\frac{\mathrm{d}\beta_G}{\mathrm{d}\tau}= - \nabla_{\beta_G} L^{\rm G}_{\rm Sp}.
\end{equation}
We consider large $d$ and $m$, such that $\mathrm{Tr}(\hat{K}\hat{K}^T) \approx md$. We therefore reach
\begin{equation}
    \frac{\mathrm{d}\beta_G}{\mathrm{d}\tau}= - \frac{1}{d}\frac{\mathrm{d} L^{\rm G}_{\rm Sp}(\beta_G)}{\mathrm{d} \beta_G}.
    \label{eq:appGF}
\end{equation}

\subsection{Low temperature expansion}\label{sec:expansion}
The key is to understand $L^{\rm G}_{\rm Sp}(\beta_G)$ now. We study the limit of large $\beta_G$ and $\beta_G^* \gg \beta_G$ (low temperatures). Recall that
\begin{equation}
L^{\rm G}_{\rm Sp} = 
\langle
\|q^G-p^G\|^2 / d
\rangle = \Big \langle
\frac{1}{d}\sum_{i=1}^d (q^G_i-p^G_i)^2
\Big \rangle,
\end{equation}
where $\langle \cdot \rangle$ means averaging over the data distribution. The student's gate probability distribution is
\begin{equation}
q^G_i = \frac{e^{y_{G,i}}}{\sum_j e^{y_{G,j}}} = \frac{e^{-\beta_G \epsilon_{G,i}}}{\sum_j e^{-\beta_G \epsilon_{G,j}}},
\end{equation}
and the teacher's probability distribution is
\begin{equation}
p^G_i = \frac{e^{y^*_{G,i}}}{\sum_j e^{y^*_{G,j}}} = \frac{e^{-\beta_G^* \epsilon_{G,i}}}{\sum_j e^{-\beta_G^* \epsilon_{G,j}}}.    
\end{equation}
Here, $y_G = Kx^G \in \mathbb{R}^d$ are the gate logits, with $x^G \in \mathbb{R}^m$ as the gate input. Energies $\epsilon_G = -y_G/\beta_G$ are i.i.d. standard normal. Since the teacher and student are aligned, their energies are the same, and only the standard deviations $\beta_G$ and $\beta_G^*$ differ.

For each $\{\epsilon_{G,i}\}_{i=1}^d$ or $\epsilon_G = [\epsilon_{G,1},...,\epsilon_{G,d}]^T$, we use $\epsilon_G^{\uparrow}$ for the corresponding sorted energies, such that $\min_i \epsilon_{G,i} = \epsilon^{\uparrow}_{G,1} \leq \epsilon^{\uparrow}_{G,2} \leq \cdots \leq \epsilon^{\uparrow}_{G,d}$. We can define energy gaps as $\Delta \epsilon_{G,i}= \epsilon^{\uparrow}_{G,i} - \epsilon^{\uparrow}_{G,1}$. In particular, we define
\begin{equation}
    \Delta \epsilon_{G} \equiv \Delta \epsilon_{G,2}.
\end{equation}

With these preparations, we can write
\begin{equation}
    \sum_i (q^G_i)^2 = \frac{1 + e^{-2\beta_G \Delta\epsilon_{G}} +\sum_{i=3}^d e^{-2\beta_G \Delta\epsilon_{G,i}}}{(1+e^{-\beta_G \Delta\epsilon_{G}}+\sum_{i=3}^d e^{-\beta_G \Delta\epsilon_{G,i}})^2}.
\end{equation}
The condition $\beta^*_G\gg \beta_G$ effectively means any term involving $\beta^*_G$ is higher-order, and we can use $\beta^*_G = \infty$ to only keep the leading terms. We then have $\sum_i (p^G_i)^2 = 1$ and
\begin{equation}
    \sum_i p^G_i q^G_i = \frac{1}{1+e^{-\beta_G \Delta\epsilon_{G}}+\sum_{i=3}^d e^{-\beta_G \Delta\epsilon_{G,i}}}.
\end{equation}
The loss can then be written as
\begin{align}
L^{\rm G}_{\rm Sp} &= 
\frac{1}{d}\Big \langle
\frac{2e^{-2\beta_G \Delta\epsilon_{G}}(1 + O(e^{-\beta_G(\epsilon^{\uparrow}_{G,3}- \epsilon^{\uparrow}_{G,2})}))}{(1 + e^{-\beta_G \Delta\epsilon_{G}} + \sum_{i=3}^d e^{-\beta_G \Delta\epsilon_{G,i}})^2}
\Big \rangle\\
& = \frac{1}{d}\Big \langle
\frac{2e^{-2\beta_G \Delta\epsilon_{G}}}{(1 + e^{-\beta_G \Delta\epsilon_{G}})^2}\Big \rangle + \frac{1}{d}\Big \langle
\frac{2e^{-2\beta_G \Delta\epsilon_{G}}}{(1 + e^{-\beta_G \Delta\epsilon_{G}})^2}O(e^{-\beta_G(\epsilon^{\uparrow}_{G,3}- \epsilon^{\uparrow}_{G,2})})\Big \rangle.\label{eq:Lexpand}
\end{align}

The first term in Eq.~(\ref{eq:Lexpand}) is the leading term, such that in terms of the scaling with $\beta_G$, the second term can only be the same or faster. To find the leading scaling with $\beta_G$, it suffices to study the scaling of the first term.
As in the main text,
\begin{equation}
\left \langle \frac{2e^{-2\beta_G \Delta \epsilon_G}}{(1 + e^{-\beta_G \Delta \epsilon_G})^2} \right \rangle =\int_0^{\infty}\rho_{\Delta \epsilon_G}(\Delta \epsilon_G) \mathrm{d} \Delta \epsilon_G \frac{2e^{-2\beta_G \Delta \epsilon_G}}{(1 + e^{-\beta_G \Delta \epsilon_G})^2}.
\end{equation}
For sufficiently \textbf{complex and diverse} data, such that the inputs to $K$ vary continuously over the space, we expect $\rho_{\Delta \epsilon_G}(0) \neq 0$, which leads to the low-temperature expansion
\begin{align}
\int_0^{\infty}\rho_{\Delta \epsilon_G}(\Delta \epsilon_G) \mathrm{d} \Delta \epsilon_G \frac{2e^{-2\beta_G \Delta \epsilon_G}}{(1 + e^{-\beta_G \Delta \epsilon_G})^2}&= \int_0^{\infty}\rho_{\Delta \epsilon_G}\Big(\frac{z}{\beta_G}\Big) \frac{\mathrm{d} z}{\beta_G} \frac{2e^{-2z}}{(1 + e^{-z})^2}\\
&=\frac{\rho_{\Delta \epsilon_G}(0)}{\beta_G}\int_0^{\infty} \mathrm{d} z \frac{2e^{-2z}}{(1 + e^{-z})^2} + o\left(\frac{1}{\beta_G}\right).\label{eq:lowexpand}
\end{align}
In the specific setup of our toy model, where energies are i.i.d. standard normal, the gap distributions can be obtained via the limiting Poisson point process for extreme order statistics in the Gumbel class \citep{resnick1987extreme}. For $\Delta \epsilon_G$, $\rho_{\Delta \epsilon_G}(\cdot)$ is approximately an exponential distribution with rate $\sqrt{2\ln d}$. The first term in Eq.~(\ref{eq:lowexpand}) therefore dominates when $\beta_G \gg \sqrt{2\ln d}$, making $\sqrt{2\ln d}$ a rough boundary between the high- and low-temperature regimes.\footnote{With the same argument, for the head layer, where the output dimension is $n$, the rough boundary between the high- and low-temperature regimes is given by $\beta_H = \sqrt{2\ln n}$.}

Finally, we can have the loss as
\begin{equation}
    L^{\rm G}_{\rm Sp} = \frac{1}{d}\left(\frac{\rho_{\Delta \epsilon_G}(0)}{\beta_G}\int_0^{\infty} \mathrm{d} z \frac{2e^{-2z}}{(1 + e^{-z})^2} + o\left(\frac{1}{\beta_G}\right) \right) (1 + O(1)),
\end{equation}
where the $O(1)$ comes from the second term of Eq.~(\ref{eq:Lexpand}). The second term in Eq.~(\ref{eq:Lexpand}) is actually $O(\langle e^{-\beta_G\Delta\epsilon_{G,3}}\rangle)$. With extra information from extreme order statistics of i.i.d. standard normal random variables, $\rho_{\Delta \epsilon_{G,k}}(\Delta \epsilon_{G,k}) \sim (\Delta \epsilon_{G,k})^{k-2}$, we can obtain $O(\langle e^{-\beta_G\Delta\epsilon_{G,3}}\rangle) = O(1/\beta_G^2)$, similar to what we did for the first term of Eq.~(\ref{eq:Lexpand}). In our specific setup, we can further tighten the loss expression by knowing $O(1)$ above as $O(1/\beta_G)$, yielding
\begin{equation}
    L^{\rm G}_{\rm Sp} = \frac{\rho_{\Delta \epsilon_G}(0)}{\beta_G d}\int_0^{\infty} \mathrm{d} z \frac{2e^{-2z}}{(1 + e^{-z})^2} + o\left(\frac{1}{\beta_G}\right).
\end{equation}

With or without the extra information about the extreme value statistics, once $\rho_{\Delta \epsilon_G}(0) \neq 0$, we have the scaling for large $\beta_G$:
\begin{equation}
    L^{\rm G}_{\rm Sp} = \Theta \left( \frac{1}{\beta_G} \right).
\end{equation}
Combined with the effective gradient flow dynamics of $\beta_G$, i.e., Eq.~(\ref{eq:appGF}), we have
\begin{equation}
    \beta_G = \Theta(\tau^{1/3}),~\text{thus}~L^{\rm G}_{\rm Sp}=\Theta(\tau^{-1/3}).
\end{equation}

The power-law result above is correct when $\beta_G^* \gg \beta_G$. When $\beta_G$ approaches $\beta_G^*$, one can always expand the loss around $\beta_G^*$. The leading term in the loss should then be $\Theta((\beta_G-\beta_G^*)^2)$, where the coefficient is related to the Hessian. We therefore have exponential convergence. When $\beta_G^*$ is large, the full picture is then: (i) when $\beta_G$ is small, there is no power-law loss; (ii) when $\beta_G$ is large while still much smaller than $\beta_G^*$, loss enters the $1/3$ power-law regime; (iii) when $\beta_G$ approaches $\beta_G^*$, loss will deviate from the power law and converge exponentially. If $\beta_G^* = \infty$, the $1/3$ scaling will last forever. In \cite{liu2026universal}, the $1/3$ power-law regime is called the ``intermediate regime".

We focus on dynamics at low temperatures in this paper. On the other hand, when both $\beta_G^*$ and $\beta_G$ are small, the softmax function can be expanded with respect to the logits directly, i.e., the high-temperature expansion \citep{liu2026universal}. The leading terms in the high-temperature expansion are linear in logits, and the model is therefore effectively linear. In our specific setup, where the inputs are isotropic, the loss converges exponentially in the high-temperature regime. 

\subsection{Loss universality}\label{sec:theorem}

In the above appendices, we analyzed a single softmax inside the model. We find that training such a softmax to learn peaked distributions will lead to a part of the loss scaling as $\tau^{-1/3}$. If there are more softmax functions operating at low temperatures, each of them can contribute a $\tau^{-1/3}$ term to the late-time loss. The interaction between softmax functions is higher-order if each is already close to being optimal. The late-time loss therefore scales as $\tau^{-1/3}$ as long as there exist softmax functions approaching peaked distributions.

How the final loss specifically depends on the softmax outputs is affected by the position of the softmax. For the final softmax, the final loss is the KL divergence of output probabilities, which shows $1/3$ scaling \citep{liu2026universal,kuhn2026boundary}. For intermediate softmax functions, the final loss is more similar to MSE between their outputs (after later layers converge), which also scales as $\tau^{-1/3}$. These examples therefore suggest that $1/3$ time scaling is not specific to how the loss depends on softmax probabilities. 

Here, we solidify intuitions into a theorem. We will first introduce the setup and notations valid and self-consistent only in this Appendix~\ref{sec:theorem}.
\begin{definition}[Problem setup]\label{def:setup}
    Consider a trainable student module $q = \mathrm{Softmax}(y(x)) \in \mathbb{R}^d$ with logits $y$ depending on model input $x$. It tries to approach a teacher $p = \mathrm{Softmax}(y^*(x)) \in \mathbb{R}^d$ where $y^*(\cdot)$ is a fixed mapping by minimizing loss $L = \langle D(q,p) \rangle$. Here, $D(\cdot,\cdot)$ is a measure of difference and $\langle \cdot \rangle$ denotes averaging over the data distribution. Without loss of generality, we set $\sum_i y_i = 0$ and $\sum_i y^*_i = 0$.
\end{definition}
This setup is general. If we imagine the module is the final softmax and $D(\cdot,\cdot)$ is KL divergence, it can describe toy models in \cite{liu2026universal}. And if we imagine the module as an intermediate softmax, once all later layers converge, the loss only depends on this softmax ($D(\cdot,\cdot)$ will depend on parameters of the later layers and cannot be written as a simple function), which is the gate-layer training that the current paper focuses on.

We next introduce the assumptions facilitating further analysis. As in the main text, we define coldness $\beta$ as the standard deviation of $y$ and $\beta^*$ as the standard deviation of $y^*$. The student is often aligned in early training:
\begin{assumption}[Aligned student]\label{asu:align}
    Under the setup (Definition~\ref{def:setup}), the student has $y\propto y^*$ for all inputs, such that the loss only depends on $\beta$ and $\beta^*$.
\end{assumption}
We also define energies $\epsilon = - y / \beta = - y^* / \beta^*$ under aligned student. 
If the logits are obtained by multiplying a hidden state by a matrix, and the data is complex enough such that hidden states are distributed continuously across the hidden space, the probability density of hidden states on decision boundaries of output classes will be non-zero, which inspires the following generic assumption.
\begin{assumption}[Complex and diverse data]\label{asu:data}
    Under the setup (Definition~\ref{def:setup}) and with the aligned student (Assumption~\ref{asu:align}), for each $\{\epsilon_i\}_{i=1}^d$ or $\epsilon = [\epsilon_1,...,\epsilon_d]^T$, we use $\epsilon^{\uparrow}$ for the corresponding sorted energies, such that $\min_i \epsilon_i = \epsilon^{\uparrow}_1 \leq \epsilon^{\uparrow}_2 \leq \cdots \leq \epsilon^{\uparrow}_d$. We write energy gaps as $\Delta \epsilon_i = \epsilon^{\uparrow}_i - \epsilon^{\uparrow}_1 \geq 0$ and denote $\Delta \epsilon \equiv \Delta \epsilon_2$. The data distribution affects the probability density $\rho_{\Delta \epsilon_i}(\cdot)$ of gap $\Delta \epsilon_i$. Data are sufficiently complex and diverse such that $\rho_{\Delta \epsilon}$ is continuous near zero and $\rho_{\Delta \epsilon}(0) \neq 0$.
\end{assumption}

We next explain the constraints on the loss functions.
\begin{definition}[Reasonable loss]\label{def:loss}
    Under the setup (Definition~\ref{def:setup}), the loss function $L = \langle D(q,p) \rangle$ is called reasonable if the following conditions are satisfied:
    
    $\bullet$ (Regular) $D(q,p)$ is permutation invariant (i.e., exchanging any two coordinates of $q$ and $p$ does not change $D(q,p)$), non-negative, and vanishes if and only if the two distributions agree. $D(q,p)$ remains finite and continuous when the teacher $p$ is a one-hot distribution for an aligned student.

    $\bullet$ (Differentiable) $D(q,p)$ is Lipschitz continuous in $q$ on the probability simplex for any $p$ when the student is aligned. $D(q,p)$ is differentiable in $q$ on the probability simplex excluding the one-hot vertices for any $p$ when the student is aligned. The loss $L$ is differentiable in model parameters.
\end{definition}
There are several remarks. The requirement that $D(q,p)=0$ iff $q=p$ is not necessary but convenient. For example, cross-entropy converges to the entropy of $p$ when $q=p$, which may not be zero. But it is easy to define cross-entropy minus the entropy of $p$, i.e., the KL divergence, to satisfy the condition. The behaviors of how cross-entropy converges to its final value (the entropy) and how KL divergence converges to zero are the same. And we only care about how the reducible part of the loss vanishes. So, it is convenient to set $D(q,p)=0$ iff $q=p$. But any $D(q,p)$ that can be modified to satisfy this condition, like cross-entropy, is also ``reasonable".
We think the constraints define reasonable losses as they basically require the loss to be finite and to have finite gradients, which are desired by healthy gradient-based optimization.
It is straightforward to check that commonly used MSE, KL divergence, and polynomials of $\|p-q\|^2$ are reasonable.

With all the preparations, we can now state the claim.
\begin{theorem}[Loss universality]\label{thm:loss}
    Under the setup (Definition~\ref{def:setup}), assuming aligned student (Assumption~\ref{asu:align}) and complex data (Assumption~\ref{asu:data}), any reasonable loss $L$ (Definition~\ref{def:loss}) satisfies $L = \Theta(\beta^{-1})$ and $\nabla_{\beta} L = \Theta(\beta^{-2})$ for sufficiently large $\beta$ and $\beta^* \gg \beta$.
\end{theorem}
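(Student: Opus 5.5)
Since $\beta^*\gg\beta$, we set $\beta^*=\infty$ at leading order, exactly as in Appendix~\ref{sec:expansion}. Then the teacher $p$ is the one-hot vector $e_1$ at the minimum-energy index. By permutation invariance of $D$ we may relabel coordinates so that the energies are sorted, $\epsilon=\epsilon^\uparrow$. The student then has
\begin{equation}
q_i=\frac{e^{-\beta\Delta\epsilon_i}}{1+\sum_{j\ge 2}e^{-\beta\Delta\epsilon_j}}.
\end{equation}
Define $f(q)=D(q,e_1)$. By the regularity assumption, $f$ is finite, continuous and Lipschitz on the simplex, vanishes only at $q=e_1$, and is positive elsewhere. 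The loss is $L(\beta)=\langle f(q(\beta\Delta\epsilon_2,\dots,\beta\Delta\epsilon_d))\rangle$. The key observation is that $q$ depends on $\beta$ and the gaps only through the rescaled gaps $z_i=\beta\Delta\epsilon_i$. We will change variables $z=\beta\Delta\epsilon$, as in Eq.~(\ref{eq:lowexpand}), and isolate the two-class ambiguity.

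\textbf{Upper bound.} By the Lipschitz property,
\begin{equation}
f(q)\le C\|q-e_1\|\le C'\sum_{j\ge2}e^{-\beta\Delta\epsilon_j}\le C'(d-1)e^{-\beta\Delta\epsilon}.
\end{equation}
Hence $L\le C''\int_0^\infty\rho_{\Delta\epsilon}(u)e^{-\beta u}\,du$. Substituting $u=z/\beta$ and using continuity of $\rho_{\Delta\epsilon}$ near $0$ for small $u$ gives a bound of order $\rho_{\Delta\epsilon}(0)/\beta$. The remaining range $u>\delta$ contributes at most $e^{-\beta\delta}$, so $L=O(1/\beta)$.

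\textbf{Lower bound.} Here we restrict to inputs with $\Delta\epsilon\in[0,c/\beta]$ and with the third gap $\Delta\epsilon_3\ge\delta$ for some fixed $\delta>0$. On this event $q$ lies in a compact set bounded away from $e_1$, since $q_2\ge e^{-c}/(1+(d-1))$. Continuity and positivity of $f$ then give $f\ge m_c>0$ there. The probability of this event is at least $\int_0^{c/\beta}\rho_{\Delta\epsilon}(u)\,du-\Pr(\Delta\epsilon_3<\delta,\ \Delta\epsilon<c/\beta)$. The first term is approximately $c\rho_{\Delta\epsilon}(0)/\beta$. The second term must be shown to be $o(1/\beta)$ for small $\delta$, which requires that ties of the three lowest energies be lower-order, i.e.\ that the joint density of $(\Delta\epsilon,\Delta\epsilon_3)$ not concentrate at $(0,0)$. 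I would state this as a mild consequence of Assumption~\ref{asu:data}, interpreted as a continuous joint gap density. Alternatively, I would lower-bound $f$ directly using permutation invariance together with $q_2\ge e^{-c}/d$, with no condition on $\Delta\epsilon_3$: since $q_2$ is bounded below, $q$ stays a fixed distance from $e_1$, so the compactness argument applies on the whole event $\{\Delta\epsilon\le c/\beta\}$. Either route yields $L=\Omega(1/\beta)$.

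\textbf{Gradient, the main obstacle.} Differentiating inside the average gives
\begin{equation}
\nabla_\beta L=\Big\langle \nabla_q f\cdot\partial_\beta q\Big\rangle,
\end{equation}
valid because $q\neq e_1$ for finite $\beta$ and $f$ is differentiable off the vertices. We have $\partial_\beta q=\beta^{-1}\sum_j z_j\partial_{z_j}q$. Each term of the form $z e^{-z}$ is bounded and exponentially small in $z$, and $\nabla_q f$ is bounded by the Lipschitz constant. Therefore $|\nabla_\beta L|\le \beta^{-1}\langle g(\beta\Delta\epsilon)\rangle$ with $g$ integrable, which gives $O(\beta^{-2})$ by the same change of variables. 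For the matching lower bound, the cleanest route is to write
\begin{equation}
L(\beta)\approx\beta^{-1}\rho_{\Delta\epsilon}(0)\int_0^\infty \bar f(z)\,dz,
\qquad \bar f(z)=f\big(q\text{ of two-class }z\big),
\end{equation}
where the three-way-tie contributions are higher order. Differentiating then gives $\nabla_\beta L\approx-\beta^{-2}\rho_{\Delta\epsilon}(0)\int_0^\infty\bar f$, which is strictly negative. Justifying the exchange of differentiation and the asymptotic expansion, by dominated convergence using the integrable bound $g$, is the step I expect to need the most care.
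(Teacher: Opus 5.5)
Your proof of $L=\Theta(\beta^{-1})$ is correct but organized differently from the paper's. The paper truncates to the two-class vector $q^\circ_{\beta\Delta\epsilon}$ and asserts $L=\langle D(q^\circ_{\beta\Delta\epsilon},\boldsymbol{e}_1)\rangle(1+O(1))$. It then computes the leading term $\rho_{\Delta\epsilon}(0)\beta^{-1}\int_0^\infty\lambda(z)\,dz$, with $\lambda(z)=D(q^\circ_z,\boldsymbol{e}_1)\le C_\lambda e^{-z}$. You instead bound the full $d$-class loss directly. Lipschitz continuity gives $f(q)\le C'(d-1)e^{-\beta\Delta\epsilon}$ for the upper bound. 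Positivity of $f$ on the compact set $\{q_1\le 1-e^{-c}/d\}$, applied on the event $\{\Delta\epsilon\le c/\beta\}$, gives the lower bound. Your route needs no control of the higher gaps and gives genuine two-sided bounds. The paper's $(1+O(1))$ factor does not rigorously deliver these, since nothing shows it is bounded away from zero; in exchange, the paper's route identifies the constant. Keep only your second lower-bound route. In the first, $\Pr(\Delta\epsilon_3<\delta,\ \Delta\epsilon<c/\beta)$ is generically of order $\beta^{-1}$, not $o(\beta^{-1})$, with a coefficient that vanishes only as $\delta\to 0$. Your $O(\beta^{-2})$ bound on $\nabla_\beta L$ is fine.

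The gradient lower bound is where your proposal is incomplete. Differentiating $L\approx\beta^{-1}\rho_{\Delta\epsilon}(0)\int_0^\infty\bar f(z)\,dz$ is not a valid step, because an $o(\beta^{-1})$ remainder need not have an $o(\beta^{-2})$ derivative; the paper makes exactly this remark. The paper instead proceeds as follows:
\begin{itemize}
\item it differentiates under the integral first, $\nabla_\beta\langle\lambda(\beta\Delta\epsilon)\rangle=\beta^{-2}\int_0^\infty z\lambda'(z)\rho_{\Delta\epsilon}(z/\beta)\,dz$;
\item it dominates the integrand using $|\lambda'(z)|\le C_\lambda e^{-z}$ and lets $\rho_{\Delta\epsilon}(z/\beta)\to\rho_{\Delta\epsilon}(0)$;
\item it evaluates $\int_0^\infty z\lambda'(z)\,dz=-\int_0^\infty\lambda(z)\,dz<0$ by parts.
\end{itemize}
This identity fixes the sign even when $\lambda$ is not monotone. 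Alternatively, once dominated convergence shows that $\beta^2\nabla_\beta L$ has a limit $B$, you can write $L(\beta)=-\int_\beta^\infty\partial_s L\,ds$. Your own bound $L\ge c/\beta$ then forces $B\le -c<0$.

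Either way, reducing to the two-class term requires the multi-class remainder to contribute only $o(\beta^{-2})$ to $\nabla_\beta L$. This does not follow from Assumption~\ref{asu:data}, which constrains only the marginal of $\Delta\epsilon$. Suppose $\Delta\epsilon_3=\Delta\epsilon$ on an event carrying part of the density at zero. Then three-class configurations enter at leading order, and your constant $\rho_{\Delta\epsilon}(0)\int_0^\infty\bar f(z)\,dz$ is in general wrong. The paper tacitly makes the same assumption, since it differentiates only the two-class term. You should state it as an explicit extra hypothesis rather than treat it as automatic. One sufficient version is $\Pr(\Delta\epsilon_3\le u)=o(u)$ together with continuity of $\nabla_q D$ up to the edge $q_3=\dots=q_d=0$, away from the vertices.
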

The result $L = \Theta(\beta^{-1})$ is the key to $1/3$ time scaling. If $\beta$ follows gradient flow effectively, the gradient is $\Theta(\beta^{-2})$ and time integral yields $\beta = \Theta(\tau^{1/3})$, which leads to $L = \Theta(\tau^{-1/3})$.

\begin{proof}
We next prove Theorem~\ref{thm:loss}. Introducing
\begin{equation}
    q^{\uparrow} = \mathrm{Softmax}(-\beta \epsilon^{\uparrow})~\text{and}~p^{\uparrow} = \mathrm{Softmax}(-\beta^* \epsilon^{\uparrow})
\end{equation}
as the sorted student and teacher probabilities, respectively, we have
\begin{equation}
    D(q,p) = D(q^{\uparrow},p^{\uparrow}).
\end{equation}
We denote
\begin{equation}
    q^{\circ}_z = \left(
\frac{1}{1+e^{-z}},
\frac{e^{-z}}{1+e^{-z}},
0,\ldots,0
\right)^T~\text{and}~\boldsymbol{e}_1 = (1, 0, 0, \ldots,0)^T.
\end{equation}
The condition $\beta^* \gg \beta$ is equivalent to substituting $\beta^*=\infty$, ignoring high-order terms about $\beta^*$.
\begin{equation}
    D(q,p) = D(q^{\uparrow},\boldsymbol{e}_1) = D(q^{\circ}_{\beta\Delta \epsilon},\boldsymbol{e}_1) + (D(q^{\uparrow},\boldsymbol{e}_1)-D(q^{\circ}_{\beta\Delta \epsilon},\boldsymbol{e}_1)).
\end{equation}
Since $D(\cdot,\cdot)$ is differentiable,
\begin{equation}
    \frac{D(q^{\uparrow},\boldsymbol{e}_1)-D(q^{\circ}_{\beta\Delta \epsilon},\boldsymbol{e}_1)}{D(\boldsymbol{e}_1,\boldsymbol{e}_1)-D(q^{\circ}_{\beta\Delta \epsilon},\boldsymbol{e}_1)} = O(e^{-\beta(\epsilon^{\uparrow}_3-\epsilon^{\uparrow}_2)}) = O(1).
\end{equation}
The $O(1)$ bound is very loose, including the worst case that $\epsilon^{\uparrow}_3-\epsilon^{\uparrow}_2=0$. 
It is then safe to write
\begin{equation}
    L = \langle D(q^{\uparrow},\boldsymbol{e}_1) \rangle = \langle D(q^{\circ}_{\beta\Delta \epsilon},\boldsymbol{e}_1)\rangle(1 + O(1)).
\end{equation}
It suffices to study the scaling of $\langle D(q^{\circ}_{\beta\Delta \epsilon},\boldsymbol{e}_1)\rangle$.

Averaging data is equivalent to averaging over $\Delta \epsilon$ for $ D(q^{\circ}_{\beta\Delta \epsilon},\boldsymbol{e}_1)$,
\begin{align}
    \langle D(q^{\circ}_{\beta\Delta \epsilon},\boldsymbol{e}_1)\rangle &= \int_0^\infty D(q^{\circ}_{\beta\Delta \epsilon},\boldsymbol{e}_1) \rho_{\Delta \epsilon}(\Delta \epsilon) \mathrm{d}\Delta \epsilon\\
    &= \int_0^\infty D(q^{\circ}_{z},\boldsymbol{e}_1) \rho_{\Delta \epsilon}(z/\beta) \frac{\mathrm{d}z}{\beta}.
\end{align}
We next need to study the property of $\lambda(z) \equiv D(q^{\circ}_{z},\boldsymbol{e}_1)$.\footnote{For KL divergence, $\lambda(z) = \ln(1+e^{-z})$; for MSE, $\lambda(z) = \frac{2e^{-2z}}{(1 + e^{-z})^2d}$.} Since $D(\cdot,\cdot)$ is Lipschitz, we know that there exists some constant $C_\lambda$ such that
\begin{equation}
    \lambda(z) = D(q^{\circ}_{z},\boldsymbol{e}_1) \leq C_\lambda e^{-z}.
\end{equation}
This guarantees that 
\begin{equation}
    \int_0^\infty \lambda(z) \mathrm{d}z < \infty.
\end{equation}
It is therefore valid to write
\begin{align}
    \int_0^\infty D(q^{\circ}_{z},\boldsymbol{e}_1) \rho_{\Delta \epsilon}(z/\beta) \frac{\mathrm{d}z}{\beta} &= \int_0^\infty \lambda(z) (\rho_{\Delta \epsilon}(0)+o(1)) \frac{\mathrm{d}z}{\beta}\\
    &=\frac{\rho_{\Delta \epsilon}(0)}{\beta}\int_0^\infty \lambda(z)\mathrm{d}z + o\left(\frac{1}{\beta}\right),
\end{align}
where Assumption~\ref{asu:data} is also used.
We can obtain
\begin{equation}
    L = \left(\frac{\rho_{\Delta \epsilon}(0)}{\beta}\int_0^\infty \lambda(z)\mathrm{d}z + o\left(\frac{1}{\beta}\right)\right)(1 + O(1)),
\end{equation}
which is equivalent to
\begin{equation}
    L = \Theta(\beta^{-1}),~\beta\to \infty~\&~\beta/\beta^*\to 0.
\end{equation}

For an arbitrary function that is $\Theta(\beta^{-1})$, its derivative is not necessarily $\Theta(\beta^{-2})$ as it can be very oscillatory. But $\nabla_{\beta} L = \Theta(\beta^{-2})$ as will be shown. Intuitively, fast decay of $\lambda(z)$ as a result of Lipschitz continuity of $D(\cdot,\cdot)$ and continuity of $\rho_{\Delta \epsilon}$ prevents strong oscillation of $L$.
Formally,
\begin{equation}
    |\lambda'(z)| \leq C_\lambda e^{-z},
\end{equation}
and
\begin{equation}
    \int_0^\infty z \lambda'(z) \mathrm{d}z < \infty.
\end{equation}
Integrating by parts, we have
\begin{equation}
    \int_0^\infty z \lambda'(z) \mathrm{d}z = - \int_0^\infty \lambda(z) \mathrm{d}z.
\end{equation}
We can then have
\begin{align}
    \nabla_{\beta} \langle D(q^{\circ}_{\beta\Delta \epsilon},\boldsymbol{e}_1)\rangle &= \int_0^\infty \nabla_{\beta}\lambda(\beta \Delta \epsilon) \rho_{\Delta \epsilon}(\Delta \epsilon) \mathrm{d}\Delta \epsilon\\
    &= \int_0^\infty \lambda'(z) z \rho_{\Delta \epsilon}(z/\beta) \frac{\mathrm{d}z}{\beta^2}\\
    & = -\frac{\rho_{\Delta \epsilon}(0)}{\beta^2}\int_0^\infty \lambda(z)\mathrm{d}z + o\left(\frac{1}{\beta^2}\right),
\end{align}
which leads to $\nabla_{\beta} L = \Theta(\beta^{-2}),~\beta\to \infty~\&~\beta/\beta^*\to 0$.
\end{proof}

\section{LLM experiments}

\subsection{Evaluation of open-source models}\label{app:eval}

\subsubsection{Evaluation methods}\label{sec:C.1.1}

We evaluate the Pythia \citep{biderman2023pythia} and OLMo-2 \citep{olmo20242} model families, using text from the Pile \citep{gao2020pile} and FineWeb \citep{penedo2024fineweb} for Pythia and from FineWeb for OLMo-2.
For each family, we load checkpoints at training steps that are approximately logarithmically spaced, spanning early training to the final checkpoint.
The same procedures are applied uniformly across all model sizes. The checkpoint steps, listed from early to late for each model, are specified in the evaluation scripts referenced below, and the specific models are reported alongside the results in Appendix~\ref{sec:C.1.3}.
Three quantities are measured at each checkpoint, described in turn below.

\paragraph{Loss and LM head logits.}
We compute the cross-entropy loss and the LM head logit coldness at each checkpoint (\texttt{pythia-logit-1.py} for Pythia and \texttt{olmo-logit-0.py} for OLMo-2).
For each checkpoint, we perform a forward pass on batches of text tokenized to a context length of $1024$, evaluating on approximately $5 \times 10^6$ tokens in total.
Valid token positions are those that are non-padding and have a non-padding successor (since the loss is next-token prediction).
The cross-entropy loss is averaged over valid tokens.
The LM head logit coldness is the standard deviation of the pre-softmax LM head logits over the vocabulary, averaged over valid token positions.

\paragraph{Attention logit coldness and entropy.}
We compute the per-head pre-softmax attention score standard deviation and softmax entropy across training checkpoints (\texttt{eval\_attn\_logit\_std.py} for Pythia and \texttt{olmo/eval\_attn\_logit\_std.py} for OLMo-2).
For each checkpoint, we evaluate on approximately $2 \times 10^5$ tokens with context length $2048$.
For each attention layer and head, we extract the pre-softmax attention scores $s_{ij} = (\boldsymbol{q}_i \cdot \boldsymbol{k}_j) / \sqrt{d_{\rm head}}$ with rotary embeddings applied, where $i$ and $j$ denote query and key positions, respectively.
Causal masking restricts each query position $i$ to the $i+1$ valid key positions $0, 1, \ldots, i$.
For each query position $i \geq 1$, we compute the unbiased standard deviation of the scores over its valid key positions, then average over query positions and evaluation batches to obtain the per-head logit coldness.
We also compute the entropy (in nats) of the resulting softmax attention distribution, averaged in the same way.

\paragraph{Energy gap distribution.}
We compute the energy gap for both attention heads and the LM head at each checkpoint (\texttt{eval\_energy\_gap.py} for Pythia and \texttt{olmo/eval\_energy\_gap.py} for OLMo-2).
For a logit vector $y$, the energy gap is
\begin{equation}
    \Delta\epsilon = \frac{y_{(1)} - y_{(2)}}{\mathrm{std}(y)},
\end{equation}
where $y_{(1)}$ and $y_{(2)}$ are the largest and second-largest logits.
For attention, $y$ consists of the pre-softmax scores over valid causal key positions, using the same masking as above for $i \geq 1$.
For the LM head, $y$ is the logit vector over the full vocabulary.
Gaps are accumulated as a fixed-bin histogram over approximately $2 \times 10^6$ tokens per checkpoint, which yields the distribution needed to check whether the density at $\Delta\epsilon \to 0$ is non-zero, as required by the theory (main text, Section~\ref{sec:theory}).

\subsubsection{Analysis methods}\label{sec:C.1.2}

\paragraph{Dynamic time.}
The dynamic time $\tau(t) = \sum_{t'=1}^{t} \eta_{t'}$ (main text) is computed by numerically integrating the learning rate schedule.
For Pythia, the schedule uses a linear warmup for the first $0.01\,t_{\max}$ steps, followed by a cosine decay:
\begin{equation}
    \eta_{t'} = \eta_{\rm peak}\bigl[0.9\cdot\tfrac{1}{2}(\cos\theta_{t'}+1)+0.1\bigr], \quad
    \theta_{t'} = \frac{t'-0.01\,t_{\max}}{0.99\,t_{\max}}\,\pi,
\end{equation}
where $t_{\max}$ is the total number of training steps and $\eta_{\rm peak}$ varies by model size.
For OLMo-2, the warmup covers the first $t_{\rm warm}=2000$ steps, followed by the same cosine form:
\begin{equation}
    \eta_{t'} = \eta_{\rm peak}\bigl[0.9\cdot\tfrac{1}{2}(\cos\theta_{t'}+1)+0.1\bigr], \quad
    \theta_{t'} = \frac{t'-t_{\rm warm}}{t_{\max}-t_{\rm warm}}\,\pi.
\end{equation}
In both cases, $\tau(t)$ is obtained by cumulative summation of $\eta_{t'}$ over steps $t'=1,\ldots,t$, evaluated at the selected checkpoint steps.
Since $\eta_{\rm peak}$ differs across model sizes, the resulting $\tau$ places all model sizes on a common dynamic-time axis, enabling the loss-curve collapse shown in Figure~\ref{fig:eval}c.

\paragraph{Entropy threshold.}
Our theory predicts a rough boundary between the high- and low-temperature regimes for i.i.d.\ Gaussian logits: a softmax with $n$ logits drawn from $\mathcal{N}(0, \beta^2)$ crosses the boundary at $\beta = \sqrt{2\ln n}$ (Appendix~\ref{sec:expansion}).
Because LLM logits are not i.i.d.\ Gaussian, we convert this criterion into an entropy threshold and compare it directly with entropies of LLM distributions.
For attention, the number of valid key positions grows from $1$ to the context length as the query position advances.
For each number of valid key positions $i$ from $2$ to $2047$, we set $\beta = \sqrt{2\ln i}$, draw $i$ logits from $\mathcal{N}(0, \beta^2)$, and compute the entropy (in nats) of the resulting softmax distribution.
The attention entropy threshold $\bar{H}_{\rm attn}$ is the mean of these values over $i$, representing the average boundary entropy at the transition.
For the LM head, we apply the same procedure with $n$ equal to the vocabulary size, giving the threshold $\bar{H}_{\rm LM}$.
These two thresholds appear as horizontal dashed lines in Figure~\ref{fig:eval}a.
For attention heads, we compare the measured entropy with $\bar{H}_{\rm attn}$, and heads below the threshold are in the low-temperature regime.
For the LM head, we instead compare the target entropy of next-token prediction with $\bar{H}_{\rm LM}$: at convergence, the LM head must reproduce the next-token distribution of the data, whose entropy is bounded above by the fitted irreducible loss of $1.69$ nats from the Chinchilla scaling laws \citep{hoffmann2022chinchilla}.
This upper bound is the red diamond in Figure~\ref{fig:eval}a and already lies below $\bar{H}_{\rm LM}$, so the target entropy is lower still, and the LM head must learn peaked distributions and is in the low-temperature regime.
Results for Pythia-410M and Pythia-160M are shown in Figure~\ref{fig:pythia-entropy}.

\paragraph{Loss fitting.}
Following \cite{liu2026universal}, we fit the raw loss at each model size independently to
\begin{equation}
    L(\tau) = c_\tau\,\tau^{-\alpha_\tau} + L_{\backslash\tau},
\end{equation}
where $c_\tau$, $\alpha_\tau$, and $L_{\backslash\tau}$ are free parameters (\texttt{pythia-logit-0to7-v3.ipynb}).
Fitting is performed by nonlinear least squares on $\log L$ (rather than $L$) to reduce the influence of early-training checkpoints where the loss is large, using \texttt{scipy.optimize.curve\_fit}.
The uncertainties reported with $\alpha_\tau$ in Figures~\ref{fig:eval}c and \ref{fig:olmo2}c are standard errors, the square roots of the diagonal of the covariance matrix returned by the solver.
Early checkpoints (typically the first one or two per model) are excluded from the fit because the model is still in the rapid alignment phase and has not yet entered the power-law regime.
The fitted irreducible loss $L_{\backslash\tau}$ is subtracted from all checkpoints to yield the training-dependent component $L - L_{\backslash\tau}$, which is plotted in Figure~\ref{fig:eval}c together with the fitted exponents $\alpha_\tau$. The raw loss and the full fitting procedure for all eight Pythia model sizes are shown in Figure~\ref{fig:rawloss}.

\paragraph{Attention logit std and energy gap density.}
The per-head attention logit coldness (standard deviation of pre-softmax scores) is plotted against $\tau$ on logarithmic axes (\path{attn_logit_std_*.ipynb}).
A reference line of slope $+1/3$ is overlaid for visual comparison. No fitting of the attention logit std is performed.
Figure~\ref{fig:eval}d shows selected heads from Pythia-12B (\path{attn_logit_std_12b.ipynb}): the logit coldness of certain attention heads grows approximately as $\tau^{1/3}$ while the LM head coldness saturates, identifying attention as the late-training bottleneck. Results for all heads across all layers of Pythia-12B are shown in Figure~\ref{fig:12b-all-coldness}.
For the energy gap, the raw histogram counts are divided by the product of the bin width and the total sample count to obtain a probability density estimate (\path{energy_gap_*.ipynb}).
Figure~\ref{fig:eval}b shows results from Pythia-12B (\path{energy_gap_12b.ipynb}): the density remains non-zero as $\Delta\epsilon \to 0$ for both a representative attention head and the LM head, confirming the theoretical condition (main text, Section~\ref{sec:theory}). Results for all heads across all layers of Pythia-12B are shown in Figure~\ref{fig:12b-all-gaps}.
Energy gap and logit coldness results for Pythia-410M and Pythia-160M are shown in Figure~\ref{fig:pythia-gap-coldness}.
The OLMo-2-13B analysis, covering entropy (panel a), energy gap density (panel b), loss scaling (panel c), and attention logit coldness (panel d), is shown in Figure~\ref{fig:olmo2}.

\subsubsection{Additional results}\label{sec:C.1.3}

Figure~\ref{fig:pythia-entropy} extends the peaked distribution analysis of Figure~\ref{fig:eval}a to Pythia-410M and Pythia-160M.
The three models (160M, 410M, and 12B) were chosen because their depths span a wide range: 12, 24, and 36 layers, respectively.
The per-head entropy at the final checkpoint is computed by the same procedure as in Appendix~\ref{sec:C.1.2} (entropy threshold paragraph), using \path{attn_logit_std_410m.ipynb} for Pythia-410M and \path{attn_logit_std.ipynb} for Pythia-160M.
In both models, low-temperature attention heads (below the entropy threshold) exist across the depth of the network, and the upper bound of the LM head target entropy also falls below its threshold.
On average, later layers tend to have lower entropy, but concentrated heads appear in early layers as well.
Our theory requires only that at least one low-temperature softmax exists, and this condition is satisfied across all three model sizes, indicating that the peaked distribution pattern is not an artifact of model scale.

\begin{figure}
\begin{center}
\includegraphics{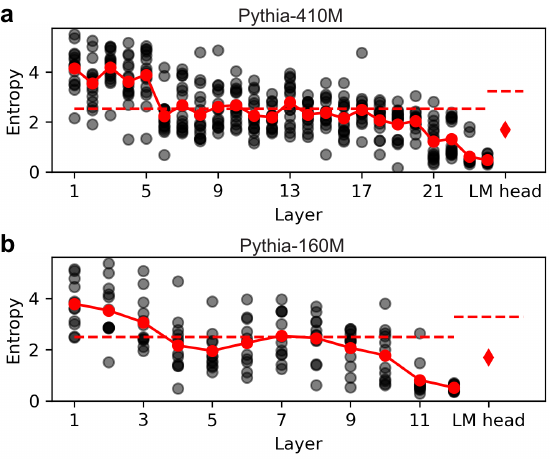}
\end{center}
\caption{Later-layer attention heads and the LM head are in the low-temperature regime for Pythia-410M (a) and Pythia-160M (b). Same format as Figure~\ref{fig:eval}a: black dots show per-head entropy at the final checkpoint, red dots show layer averages, and the red diamond shows the upper bound of the LM head target entropy ($1.69$ nats). Dashed lines mark the entropy thresholds for attention heads (left) and the LM head (right). The pattern is consistent with Pythia-12B in the main text.}
\label{fig:pythia-entropy}
\end{figure}

Figure~\ref{fig:12b-all-gaps} extends the energy gap analysis of Figure~\ref{fig:eval}b from a single representative head to all attention heads across all 36 layers of Pythia-12B (\path{energy_gap_12b.ipynb}; same normalization procedure as Appendix~\ref{sec:C.1.2}).
For most heads and layers, the density is non-zero at $\Delta\epsilon \to 0$, indicating that the theoretical condition is not confined to a few exceptional heads.
Some heads in early layers show density approaching zero at small $\Delta\epsilon$, consistent with their higher entropies in Figure~\ref{fig:eval}a.
This confirms that the energy gap condition required by the theory is broadly satisfied across the depth of Pythia-12B.

\begin{figure}
\begin{center}
\includegraphics[width = 0.9\linewidth]{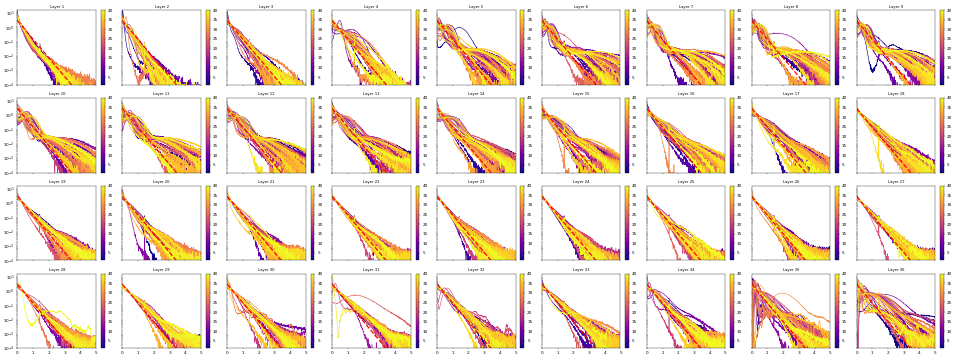}
\end{center}
\caption{Energy gap density distributions for all attention heads across all 36 layers of Pythia-12B (final checkpoint), extending Figure~\ref{fig:eval}b. Each panel corresponds to one layer; curves are colored by head index. For most heads across all layers, the density is non-zero at $\Delta\epsilon \to 0$, confirming the theoretical condition broadly. Heads with density approaching zero at small $\Delta\epsilon$ are in the high-temperature regime. Analysis follows Appendix~\ref{sec:C.1.2}.}
\label{fig:12b-all-gaps}
\end{figure}

Figure~\ref{fig:12b-all-coldness} extends the logit coldness analysis of Figure~\ref{fig:eval}d to all attention heads across all 36 layers of Pythia-12B (\path{attn_logit_std_12b.ipynb}; same procedure as Appendix~\ref{sec:C.1.2}).
Across the network, the low-temperature heads identified in Figure~\ref{fig:eval}a broadly show logit coldness growing at approximately the $+1/3$ rate at late training, indicating that their logit magnitudes have not converged.
This ongoing growth is precisely what the theory predicts for a bottleneck softmax: its logit magnitude must keep growing to track the teacher, producing $1/3$ time scaling in its loss contribution.
The behavior is therefore not peculiar to the representative head shown in Figure~\ref{fig:eval}d but is generic among low-temperature heads, further supporting attention as the late-training bottleneck.

\begin{figure}
\begin{center}
\includegraphics[width = 0.9\linewidth]{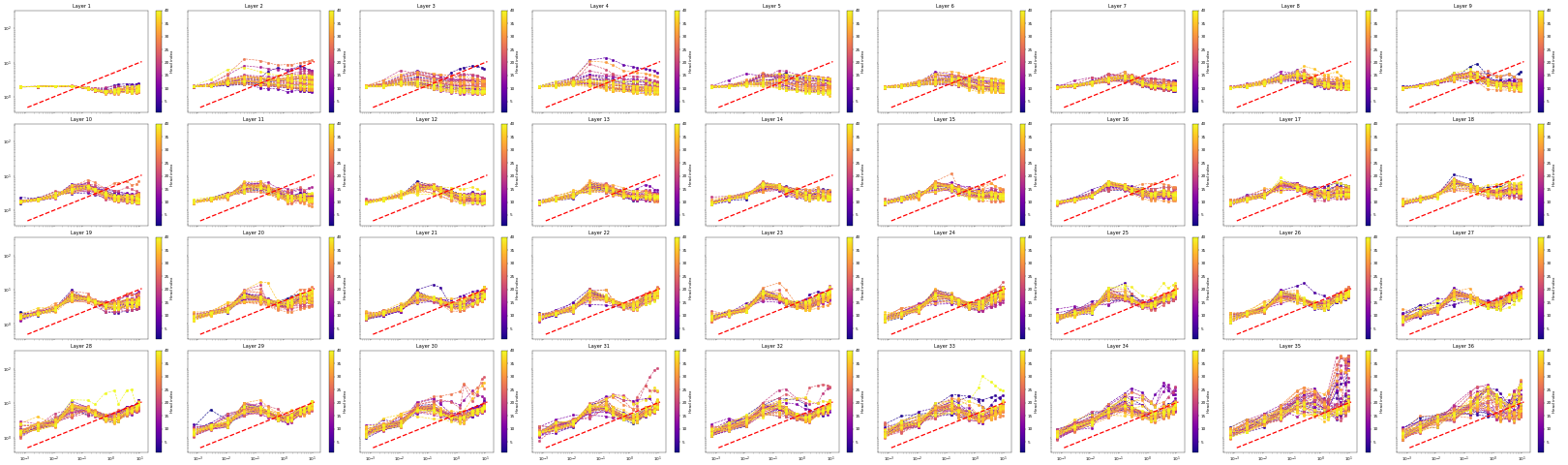}
\end{center}
\caption{Attention logit coldness versus dynamic time $\tau$ for all attention heads across all 36 layers of Pythia-12B, extending Figure~\ref{fig:eval}d. Each panel corresponds to one layer; curves are colored by head index. The dashed red line has slope $+1/3$ for reference. Low-temperature heads broadly track the $+1/3$ slope at late training, while high-temperature heads grow more slowly or saturate. Analysis follows Appendix~\ref{sec:C.1.2}.}
\label{fig:12b-all-coldness}
\end{figure}

Figure~\ref{fig:pythia-gap-coldness} replicates the energy gap and logit coldness analyses of Figure~\ref{fig:eval}b and d for Pythia-410M (panels a--b) and Pythia-160M (panels c--d), using \path{energy_gap_410m.ipynb}, \path{energy_gap_160m.ipynb}, \path{attn_logit_std_410m.ipynb}, and \path{attn_logit_std.ipynb} with the same procedures as Appendix~\ref{sec:C.1.2}.
In both models, the energy gap density of representative attention heads is non-zero at $\Delta\epsilon \to 0$, and the logit coldness of attention heads grows approximately as $\tau^{1/3}$ while the LM head coldness saturates.
These results reproduce the findings of Figure~\ref{fig:eval}b and d across different model sizes, confirming that the behaviors are not specific to Pythia-12B.

\begin{figure}
\begin{center}
\includegraphics{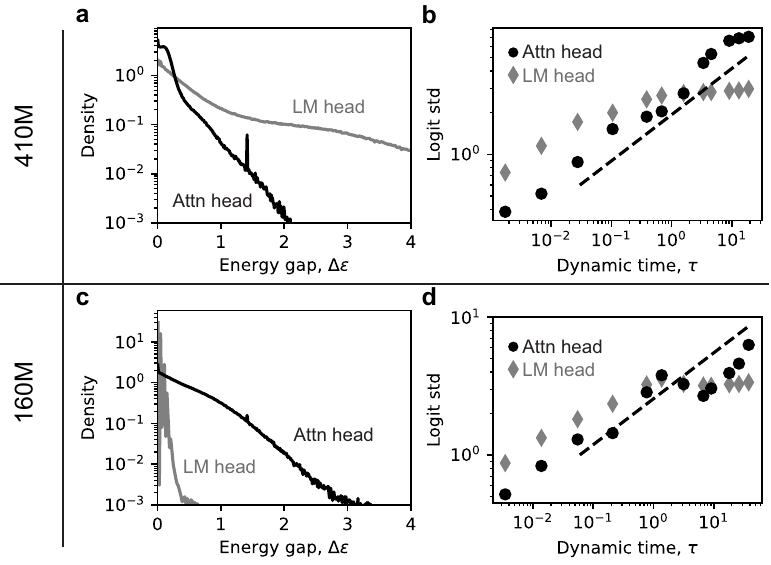}
\end{center}
\caption{Energy gap density (left column) and attention logit coldness versus $\tau$ (right column) for Pythia-410M (a--b) and Pythia-160M (c--d), replicating Figure~\ref{fig:eval}b and d. In each energy gap panel, black shows a representative attention head and gray shows the LM head; both have non-zero density at $\Delta\epsilon \to 0$. In each coldness panel, circles show an attention head and diamonds show the LM head; the dashed line has slope $+1/3$ for reference. Analysis follows Appendix~\ref{sec:C.1.2}.}
\label{fig:pythia-gap-coldness}
\end{figure}

Figure~\ref{fig:rawloss} illustrates the loss fitting procedure described in Appendix~\ref{sec:C.1.2} for all eight Pythia model sizes.
The raw cross-entropy loss $L$ (left panel) does not decay to zero as training progresses: each model converges toward a strictly positive floor, the irreducible loss $L_{\backslash\tau}$, which decreases with model size.
Fitting $L$ as a plain power law without this constant would therefore confound the training-dependent decay with model-size offsets and yield unreliable exponents.
Instead, we fit $L(\tau) = c_\tau\,\tau^{-\alpha_\tau} + L_{\backslash\tau}$ with $L_{\backslash\tau}$ as a free parameter, which correctly separates the two contributions.
After subtracting the fitted $L_{\backslash\tau}$ (right panel), the training-dependent component $L - L_{\backslash\tau}$ collapses across all eight model sizes onto a single power law close to $1/3$, validating the fitting procedure and confirming that the exponent is approximately size-independent.

\begin{figure}
\begin{center}
\includegraphics{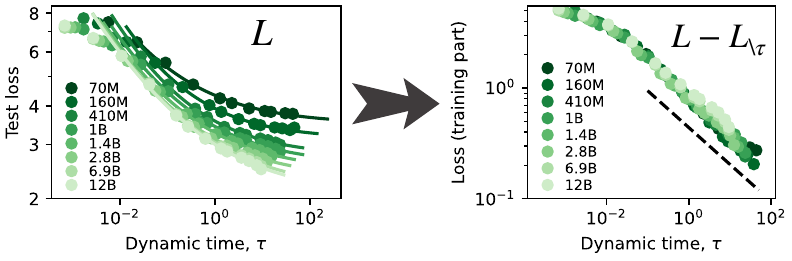}
\end{center}
\caption{The raw cross-entropy loss and the fitting procedure for all eight Pythia model sizes (70M to 12B), extending Figure~\ref{fig:eval}c. Left: raw loss $L$ versus $\tau$ with power-law-plus-constant fits overlaid (solid lines). The loss converges to a strictly positive floor $L_{\backslash\tau}$ that decreases with model size, requiring the constant to be included in the fit. Right: training-dependent component $L - L_{\backslash\tau}$ after subtracting the fitted irreducible loss. Curves from all model sizes collapse and follow a single $\sim\tau^{-1/3}$ power law (dashed reference line). Analysis follows Appendix~\ref{sec:C.1.2}.}
\label{fig:rawloss}
\end{figure}

Figure~\ref{fig:olmo2} replicates the full analysis of Figure~\ref{fig:eval} for OLMo-2, a different model family with a distinct architecture and training setup (\path{olmo/eval_attn_logit_std.py}, \path{olmo-step-tau.py}, and OLMo-2 variants of the analysis notebooks).
Panel a shows the per-head entropy of OLMo-2-13B at the final checkpoint.
Unlike Pythia, where many later-layer heads fall well below the entropy threshold, later-layer heads in OLMo-2-13B fall at most slightly below it.
Like Pythia, however, the heads deepest in the low-temperature regime appear in early layers.
We highlight one such head at layer 1 with a blue square: although the mean entropy trend still decreases with layer depth, it is the early layers that most clearly satisfy the low-temperature condition.
This difference may reflect the distinct architectural choices of OLMo-2, including grouped-query attention, RMS normalization applied to query and key projections before RoPE, and larger peak learning rates compared to Pythia.
Despite this architectural difference in which layers host the low-temperature heads, the theoretical conditions remain satisfied.
Panel b shows that the energy gap density of the highlighted layer-1 head is non-zero at $\Delta\epsilon \to 0$, and panel d shows that its logit coldness grows as $\tau^{1/3}$ at late training, consistent with the theory's prediction for a bottleneck softmax.
Based on the theory, the existence of such a head, regardless of its layer, is sufficient to drive $1/3$ scaling in the total loss.
Panel c confirms this prediction: the fitted loss exponents for OLMo-2-1B, 7B, and 13B are approximately $0.35$, $0.30$, and $0.36$, all close to $1/3$.
These results establish that the $1/3$ scaling law generalizes across model families, even though details like which head has the lowest temperature may differ.

\begin{figure}
\begin{center}
\includegraphics[width=\linewidth]{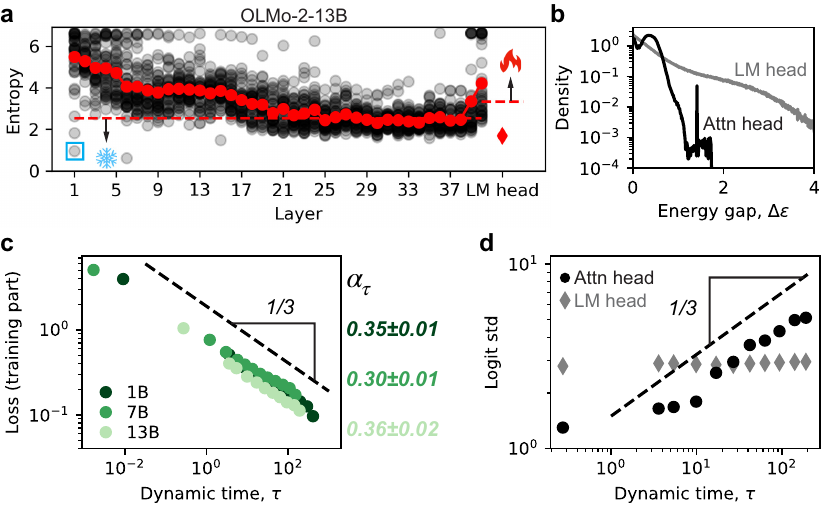}
\end{center}
\vskip -0.1in
\caption{Replication of the Figure~\ref{fig:eval} analysis for OLMo-2, extending the results to a different model family. (a) Per-head entropy of OLMo-2-13B at the final checkpoint. The red diamond shows the upper bound of the LM head target entropy. Although the mean entropy decreases in later layers, later-layer heads fall at most slightly below the entropy threshold (horizontal dashed line), and the heads deepest in the low-temperature regime appear in early layers. The layer-1 head highlighted by a blue square is in the low-temperature regime. (b) Energy gap density for the highlighted layer-1 head; the density is non-zero at $\Delta\epsilon \to 0$. (c) Loss versus dynamic time $\tau$ for OLMo-2-1B, 7B, and 13B, with fitted exponents $0.35$, $0.30$, and $0.36$. (d) Attention logit coldness of the highlighted layer-1 head grows as $\tau^{1/3}$ (dashed reference line), while the LM head saturates. Analysis follows Appendix~\ref{sec:C.1.2}.}
\label{fig:olmo2}
\end{figure}

\subsection{Training with MSE}\label{app:train}

\subsubsection{Training methods}

To test whether the LM head is the sole bottleneck for the $1/3$ loss scaling, we train a Pythia-160M student using MSE between final logits, bypassing the output softmax entirely.
The student is randomly initialized using the Pythia-160M architecture.
The teacher is a pretrained Pythia-160M model at its final checkpoint (step $79{,}000$), held frozen throughout.
The training objective is MSE between mean-centered logits of the student and the teacher, averaged over tokens:
\begin{equation}
    L_{\rm MSE} = \left\langle \frac{1}{n}\|\tilde{y}_s - \tilde{y}_t\|^2 \right\rangle,
\end{equation}
where $\tilde{y}_s = y_s - \bar{y}_s$ and $\tilde{y}_t = y_t - \bar{y}_t$ are the mean-centered logit vectors of the student and teacher ($\bar{y}$ denotes the vocabulary mean), $n$ is the vocabulary size here, $\langle\cdot\rangle$ denotes averaging over tokens, and the teacher temperature is $1$ (\texttt{loss.py}).
Mean-centering makes the loss invariant to global logit shifts, which do not affect any softmax output and are therefore irrelevant to the distribution being learned.

The student is trained for $79{,}000$ steps with peak learning rate $\eta_{\rm peak} = 6\times 10^{-4}$, linear warmup over $1430$ steps, and cosine decay over $143{,}000$ steps, the same schedule shape as Pythia (\texttt{train.py}).
Training uses the Pile (deduped) dataset with a batch size of $256$ sequences of $2048$ tokens (approximately $0.5\times 10^6$ tokens per step) on $8$ H200 GPUs.
Checkpoints are saved at the same $12$ steps as the Pythia evaluation in Appendix~\ref{sec:C.1.1}: $128, 256, \ldots, 79{,}000$.

\subsubsection{Analysis methods}

\paragraph{Dynamic time.}
Dynamic time $\tau$ is computed by the same cumulative-sum procedure as Appendix~\ref{sec:C.1.2}, using the Pythia learning rate schedule with $\eta_{\rm peak} = 6\times 10^{-4}$ and $t_{\max} = 143{,}000$.

\paragraph{Logit coldness.}
The per-head attention logit coldness is computed by the same procedure as in Appendix~\ref{sec:C.1.2} (\path{attn_logit_std_mse1.ipynb}).
A representative attention head (layer 1, head 4) is selected for display in Figure~\ref{fig:distill}a as filled circles.
The LM head logit coldness is the standard deviation of the pre-softmax LM logits over the vocabulary, evaluated at each checkpoint and plotted as diamonds in Figure~\ref{fig:distill}a.
A vertical line marks the dynamic time at which the LM head coldness saturates, chosen by eye as the point after which later checkpoints no longer increase, separating the early-training phase from the late-training plateau where the MSE loss fit is performed.

\paragraph{MSE loss fitting.}
The MSE training loss after the LM head saturation point is fit using the same power-law-plus-constant form as Appendix~\ref{sec:C.1.2}:
\begin{equation}
    L_{\rm MSE}(\tau) = c_\tau\,\tau^{-\alpha_\tau} + L_{\backslash\tau},
\end{equation}
with free parameters $c_\tau$, $\alpha_\tau$, and $L_{\backslash\tau}$ estimated by nonlinear least squares on $\log L_{\rm MSE}$ (\path{mse_1.ipynb}).
The fitted $L_{\backslash\tau}$ is subtracted to yield the training-dependent component $L_{\rm MSE} - L_{\backslash\tau}$, plotted in Figure~\ref{fig:distill}b together with a reference line of slope $-1/3$.

\end{document}